\newtheorem{problem}{Problem}
\newcommand{\notindependent}{\not\!\perp\!\!\!\perp}
\begin{document}
%

%

\newcommand{\bs}{\textrm{BP}}
\newcommand{\nbs}{\textrm{NBP}}
\newcommand{\prob}{\mathbb{P}}
\newcommand{\jlsay}[1]{[\textcolor{blue}{\textbf{JL: }}\textcolor{blue!60!black}{#1}]}
\newcommand{\wjsay}[1]{}

\setlength{\abovecaptionskip}{\baselineskip}
\newcommand{\ourtitle}{More Powerful Selective Kernel Tests for Feature Selection}
\twocolumn[

\aistatstitle{\ourtitle}
\runningauthor{Lim, Yamada, Jitkrittum, Terada, Matsui, Shimodaira}
\aistatsauthor{Jen Ning Lim \\
University College London
\And Makoto Yamada \\
Kyoto University, RIKEN AIP
\And 
Wittawat Jitkrittum \\
MPI for Intelligent Systems, T\"{u}bingen
\AND Yoshikazu Terada \\
Osaka University, RIKEN AIP
\And Shigeyuki Matsui \\
Nagoya University\And Hidetoshi Shimodaira \\
Kyoto University, RIKEN AIP}

\aistatsaddress{}
]
\begin{abstract}
Refining one's hypotheses in the light of data is a common scientific practice; however,
the dependency on the data introduces selection bias and can lead to specious statistical
analysis. 
An approach for addressing this is via conditioning on the selection
procedure to account for how we have used the data to generate our
hypotheses, and prevent
information to be used again after selection.
Many selective inference (a.k.a. post-selection inference) algorithms typically take this approach but will ``over-condition''
for sake of tractability. While this practice yields well calibrated statistic tests with controlled false positive rates (FPR),
it can incur a major loss in power. 
In our work, we extend two recent proposals for selecting features using the Maximum Mean Discrepancy
and Hilbert Schmidt Independence Criterion to condition on the minimal
conditioning event. We show how recent advances in
multiscale bootstrap makes
conditioning on the minimal selection event possible and demonstrate our proposal over a range of synthetic and real world experiments.
Our results show that our proposed test is indeed more powerful in most scenarios.
\end{abstract}

\section{INTRODUCTION}
%

Most statistical methods implicitly assume that parameters of the statistical investigation
are fixed apriori; that is, the choice of model, hypothesis to test, and parameters to be
estimated do not change before the data is inspected. Failure to satisfy this
can lead to disturbing properties such as uncalibrated $p$-values \citep{simmons2011false, gelman2013garden}.
The field of
selective inference (SI) considers a modernised version of statistical analysis
where we first explore the data and determine relevant parameters for our investigation.
Then, SI aims to provide valid inference under the model chosen by the data \citep{fithian2014optimal}. In our work, we extend two
algorithms that first select a set of features then perform hypothesis
testing on each of the selected features to determine whether it is
statistically significant.

One of the approaches in the field of SI is conditioning
on how the data has been used during the initial selection phase (\citet{fithian2014optimal, lee2016exact, fithian2015selective}).
This approach may be difficult to use 
since
it requires an explicit characterisation of
the selection procedure and the conditional distribution of the test statistic,
both of which can be
difficult to obtain. Fortunately, one of the key developments that has allowed
many SI algorithms to be tractable is the polyhedral
lemma \citep{lee2016exact, tibshirani2016exact}. Assume that the test
statistic is normally distributed before the selection. The polyhedral lemma
states that if the selection event can be written as a set of linear
constraints, then its conditional post-selective distribution follows a
truncated normal distribution \citep[Theorem 5.2]{lee2016exact}. This
result has been successfully applied to non-parametric kernel methods
for selecting informative features using the Hilbert Schmidt Independence
Criterion \citep{yamada2016post}, and the Maximum Mean Discrepancy
\citep{yamada2018post}, as well as multiple model comparison
\citep{lim2019kernel}.

A subtlety with SI is that the power of selective hypothesis tests (i.e.,
tests with null hypothesis that is determined by data and so \textit{random})
depends upon our choice of what to condition on.
If we condition on too little, the test will have uncontrolled false positive rate. If we condition on too much, it can incur a loss of power
\citep{fithian2014optimal}. This observation has driven research efforts to curate more
powerful hypothesis tests that have higher ``left-over'' information. These
proposals include randomising the data used during selection
\citep{tian2018selective}; a careful characterisation of how the data has
been used during the selection process as there are different costs for variable
selection and target formulation \citep{liu2018more}; and conditioning on the
minimal set, i.e., condition only on what is necessary for the test to be
valid but not more \citep{liu2018more, terada2019selective}.
The last idea forms the basis of what we propose in this work.


In the present work, we consider the problem of selecting a subset of
informative features with selective inference. We study two related
problem settings. In the first setting (Section \ref{sec:si_mmd}), given two
samples, the goal is to select a subset of features for which the marginal
distributions (restricted to the selected subset) of the two underlying
distributions significantly differ. In the second setting (Section \ref{sec:si_hsic}),
given a joint sample of covariate and response variables, the goal is to
select a subset of covariate variables whose dependency on the response is
statistically significant. While the selective tests of
\citet{yamada2018post,yamada2016post} are applicable to these problems, and have a tractable null distribution due
to the use of the polyhedral lemma, these tests do not consider the minimal
condition set, meaning that the tests may be overly conservative. 

We generalize the tests of \citet{yamada2018post} and
\citet{yamada2016post}, for the two settings respectively, and propose tests that condition on the minimal conditioning set by using the
selective multiscale bootstrap \citep{terada2017selective,
terada2019selective}. For the second problem, we
further propose a new estimator for the Hilbert Schmidt Independence
Criterion (HSIC) that takes the form of an incomplete U-statistic. We show
that the new estimator leads to a test that has higher power than the test of \citet{yamada2016post} which relies on the block estimator.
In experiments (Section \ref{sec:experiments}) on both synthetic and real
problems, we show that the new tests have well-controlled false positive
rate, and are more powerful than their respective original tests when the
number of features is large, and the number of selected features is larger
than one.



%
\section{BACKGROUND}
%
In this section, we review the Maximum Mean Discrepancy (MMD) 
and Hilbert Schmidt Independence Criterion (HSIC) which are used as
our criteria to select features as well as briefly introduce the concept of multiscale bootstrap. In Section \ref{sec:si_mmd}, we use MMD to select features that have significantly different marginal distributions and in Section \ref{sec:si_hsic}, we use HSIC to select features which have a significant dependence on the response variable respectively.

\textbf{Maximum Mean Discrepancy (MMD)} For a distribution $P$ and a positive definite kernel $K$, the mean embedding of $P$ is
defined as $\mu_P = \mathbb{E}_{x\sim P} [K(\cdot, x)]$ \citep{SmoGreSonSch2007}. The Maximum Mean Discrepancy (MMD) is a pseudo metric between two distributions
$P$ and $Q$ and is defined as $\mathrm{MMD}(P,Q) = \| \mu_P - \mu_Q\|_K$, where $\|\cdot \|_K$ denotes the norma in the reproducing kernel Hilbert space (RKHS) associated with $K$. If $K$ is a characteristic kernel, then $\mathrm{MMD}^2(P,Q) = 0 \iff P=Q$  \citep{gretton2012kernel}. An example of a characteristic kernel is the 
Gaussian kernel. It can be shown that the squared \textrm{MMD} can be written equivalently as 
$\mathrm{MMD}^2(P,Q) = \mathbb{E}_{z,z'\sim P\times Q}[h(z, z')]$ where $z:=(x,y)$ and
$h(z, z') := K(x,x') + K(y,y') - K(x',y) - K(x, y')$.
Given samples $\bm{x} := \{x_i\}_{i=1}^n$ and $\bm{y} := \{y_i\}_{i=1}^n$ of size $n$ as i.i.d.\ draws from 
$P$ and $Q$ respectively, 
an unbiased estimator is the U-statistic
$\widehat{\mathrm{MMD}}^2_u(\bm{x},\bm{y})=\frac{1}{n(n-1)}\sum_{i\neq j} h(z_i,
z_j)$. 
In our
work, we focus on a parametric bootstrap resampling procedure for multiscale
bootstrap and thus we use estimators with normal asymptotic distributions 
such as the
linear-time estimator $\widehat{\mathrm{MMD}}^2_l = \frac{2}{n}\sum_{i=1}^{n/2}
h(z_{2i}, z_{2i-1})$ \citep{gretton2012kernel}, and the incomplete U-statistic estimator
\citep{blom1976some, janson1984asymptotic} proposed by \cite{yamada2018post}:
$\widehat{\mathrm{MMD}}^2_{Inc} = \frac{1}{|\mathcal{D}_n|} \sum_{(i,j) \in
\mathcal{D}_n} h(z_i, z_j)$ where $\mathcal{D}_n$ is
random and sampled with replacement from $\{(i,j) \}_{i\neq j}$. Under weak assumptions, both $\widehat{\mathrm{MMD}}^2_l$ and
$\widehat{\mathrm{MMD}}^2_{Inc}$ are asymptotically normal for both when $P=Q$ and $P\neq Q$ 
 \citep{gretton2012kernel, yamada2018post}.

\newcommand\independent{\protect\mathpalette{\protect\independenT}{\perp}}
\def\independenT#1#2{\mathrel{\rlap{$#1#2$}\mkern2mu{#1#2}}}
\textbf{Hilbert Schmidt Independence Criterion (HSIC)} Let $x \in \mathcal{X}$
and $y \in \mathcal{Y}$ be two random variables with joint distribution
$P_{xy}$ and their respective marginals $P$ and $Q$. Let $K_\mathcal{X}$
and $K_\mathcal{Y}$ be two real-valued kernel functions defined on $\mathcal{X} \times \mathcal{X}$ and $\mathcal{Y} \times \mathcal{Y}$ respectively. The 
Hilbert Schmidt Independence Criterion \citep{gretton2005measuring} is defined as the Hilbert-Schmidt norm
of the covariance operator $\mathrm{HSIC}(P_{xy}) = || \mu_{xy} - \mu_P \otimes \mu_Q ||^2_{\mathrm{HS}}$
where $\otimes$ denotes the tensor product and 
$\mu_{xy} := \mathbb{E}_{(x,y) \sim P_{xy}} [K_\mathcal{X}(x, \cdot) \otimes K_\mathcal{Y}(y, \cdot)]$. The norm $||\cdot||_{\mathrm{HS}}$ is induced by the inner product of the space of linear operators (that are Hilbert Schmidt). See \citet[Section 2]{gretton2005measuring} for details.
If the product kernel $K_\mathcal{X}K_\mathcal{Y}$ is characteristic on the joint domain  $\mathcal{X} \times \mathcal{Y}$,
then $\mathrm{HSIC}(P_{xy}) = 0 \iff x \independent y$
($x$ and $y$ are independent) \citep[Theorem 3]{fukumizu2008kernel}. An example
of such a kernel can be constructed by letting $K_\mathcal{X}$ and $K_\mathcal{Y}$ be Gaussian
kernels on $\mathcal{X} \subseteq \mathbb{R}^{d_x}$ and $\mathcal{Y} \subseteq \mathbb{R}^{d_y}$ respectively.
Given $\bm{z}:=\{(x_i,y_i)\}_{i=1}^n$ consisting of $n$ i.i.d.\ samples from $P_{xy}$, an unbiased estimator 
can be computed as a U-statistic
$\widehat{\mathrm{HSIC}}_u(\bm{z}) = \frac{(n-4)!}{n!}\sum_{(i,j,q,r) \in \bm{i}^n_4} h(i,j, q,r)$ \citep{hoeffding1992class, song2012feature}
where $\bm{i}^n_4$ is the set of all $4$-tuples with each index occurring only once, 
$h(i,j,q,r) = \frac{1}{4!}\sum_{(s,t,u,v)}^{(i,j,q,r)}\bm{K}_{st}[\bm{L}_{st}+\bm{L}_{uv}-2\bm{L}_{su}]$ is the U-statistic kernel
with the sum being over $4!$ quadruples $(s,t,u,v)$ as permutations of $(i,j,q,r)$,
and $\bm{K,L} \in \mathbf{R}^{n,n}$ contain entries $\bm{K}_{ij} = K_\mathcal{X}(x_i,x_j)$, $\bm{L}_{ij} = K_\mathcal{Y}(y_i,y_j)$.
If $x$ and $y$ are dependent then $\widehat{\mathrm{HSIC}}_u$ is asymptotically normally distributed \citep[Theorem 5]{song2012feature}. However, if they are
independent then U-statistic is degenerate and the asymptotic distribution of $\widehat{\mathrm{HSIC}}_u$ deviates from normal 
\citep{gretton2008kernel,serfling2009approximation}. For a given block size $B$, the block estimator
of \cite{zhang2018large} is defined as $\widehat{\mathrm{HSIC}}_{Blo}(\bm{z}) =
\frac{B}{n}\sum_{i=1}^{\frac{n}{B}}\widehat{\mathrm{HSIC}}_u(\{z_j\}_{j=(i-1)B+1}^{iB})$. 
If $\lim_{n,B \rightarrow
\infty} n/B = \infty$, it can be shown that
the block estimator is normally distributed asymptotically even when $x$ is
independent of $y$ \citep[Section 3.2]{zhang2018large}. 

\textbf{Multiscale Bootstrap} A procedure
that calculates ``approximately unbiased'' $p$-values is called multiscale bootstrap proposed by \citet{shimodaira2002approximately,shimodaira2004approximately}.
It was initially proposed for a general statistical problem, called the problem
of regions \citep{efron1998problem}, where we want to compute asymptotically
accurate $p$-values for the null hypothesis $H_0: \mu \in H$ where $H$ is represented by a
region with $H \subseteq \mathbb{R}^d$ (called ``hypothesis region''). \cite{efron1996bootstrap} studied this problem under the normal model $y \sim 
\mathcal{N}(\mu, I)$ and argued that the bootstrap
probabilities $\bs(H):=\prob(y\in H)$ are biased frequentist confidence
measures. Furthermore, they showed that geometric quantities play a crucial role and bias corrected
$p$-values can be produced by using the pivotal quantity $\beta_0(y) - \beta_1 \sim \mathcal{N}(0,1)$
where $\beta_0(y)$ is the signed distance from $y$ to $\partial H$ (i.e., the boundary surface of $H$), and $\beta_1$ is
the mean curvature of $\partial H$.
More specifically, a second-order asymptotically accurate $p$-value is expressed as $p(H|y):=\bar{\Phi}(\beta_0(y)-\beta_1)$, i.e., $\forall \mu \in \partial H,\ \mathbb{P}( p(H|y) < \alpha) = \alpha +\mathcal{O}(n^{-1})$ \citep{efron1996bootstrap, shimodaira2002approximately}.
Note that $\Phi(\cdot)$ is the CDF of the standard normal distribution and $\bar{\Phi}(x):=1-\Phi(x)$. 
However, typically $\beta_0(y)$ and $\beta_1$ are hard to determine due to either the
intractability of the space or the lack of an explicit formulation in the region.
Multiscale bootstrap addresses this problem with additional computation and only requires the regions to be represented by
a function that indicates if $y\in H$ or $y\notin H$.

Let $\mathcal{X}_n= \{x_i\}_{i=1}^n$ be a dataset of sample size $n$ with each element $x_i\in \mathbb{R}^d$. We assume that there is some transformation $f_n$ such that the observed value 
$f_n(\mathcal{X}_n)$  follows a multivariate normal distribution, i.e., $y:=f_n(\mathcal{X}_n) \sim 
\mathcal{N}(\mu,I)$.
\wjsay{transformation for what? If this is the statistic, might be easier to just say so.}
Typically, $f_n$ has a factor $\sqrt{n}$ for scaling the covariance. The main idea of multiscale bootstrap is, instead of $n$ elements, it resamples $n'$ elements from $\mathcal{X}_n$ with replacement to generate $\mathcal{X}^*_{n'}$, then $y^* := f_n(\mathcal{X}^*_{n'}) \sim 
\mathcal{N}(y,\gamma^2I)$ with $\gamma^2 = n/n'$, from which we estimate the desired geometric quantities $\beta_0(y)$ and $\beta_1$
using the scaling law of bootstrap probabilities \citep{shimodaira2002approximately, shimodaira2014higher}.
It can be shown that the bootstrap probability of the region $H$ is expressed as $\bs_{\gamma^2}(H):=\mathbb{P}(y^* \in H)\approx 
\bar{\Phi}(\gamma^{-1}\beta_0(y)+\gamma\beta_1)$. \citet{shimodaira2008testing, shimodaira2014higher}
proposed the normalised bootstrap $z$-value as
$\psi_{\gamma^2}(y|H):=\gamma\bar{\Phi}^{-1}(\bs_{\gamma^2}(H)) \approx \beta_0(y)+\gamma^2\beta_1$ from which $p$-values proposed by \citet{efron1998problem}, namely $p(H|y) = \bar{\Phi}(\beta_0(y)-\beta_1)$, can
be calculated when $\gamma^2=-1$, i.e., we have
$\bar{\Phi}(\psi_{-1}(y|H))=\bar{\Phi}(\beta_0(y)-\beta_1)=p(y|H)$.
 However, it is
impossible to simulate the case where $\gamma^2 = -1$. Multiscale bootstrap tackles this problem by using a number of different sample sizes $n' \in \mathcal{M} \subset \mathbb{N}^{+}$. For each $n'$, we run bootstrap resampling of $\mathcal{X}^*_{n'}$ from $\mathcal{X}_n$ for calculating the normalised bootstrap $z$-value $\psi_{\gamma^2_{n'}}(y|H))$ with $\gamma^2_{n'}:=n/n'$.
The tuple $\{(\gamma^2_{n'}, \psi_{\gamma^2_{n'}}(y|H)) \}_{n' \in \mathcal{M}}$ is used to fit a regression model $\varphi_H(\gamma^2)$ which can then be extrapolated to $\varphi_H(-1)$. The regression model $\varphi_H(\cdot)$ can be used to calculate our $p$-values since
$p(y|H)=\Bar{\Phi}(\varphi_H(-1))$.
 See \citet[Section 6.5]{shimodaira2019selective} for several possibilities of regression models.

\textbf{Selective Multiscale Bootstrap}
Multiscale bootstrap can be extended to the problem of selective inference, where the hypothesis
is random and chosen from the data \citep{shimodaira2019selective,terada2017selective, terada2019selective}. In this problem, there is an additional region $S$ called the ``selective region'' that determines the null hypothesis we are going to test. 
If $y\in S$ then we test $H_0 : y\in H$. However, if $y\not\in S$, we ignore $H$ and no
decision is made. \citet{terada2017selective, terada2019selective} proposed
the following selective $p$-value
$$p(H|y,S) = \frac{\bar{\Phi}(\beta_0(y|H) - \beta_1)}{\bar{\Phi}(\beta_0(y|H) - \beta_1 + \beta_0(y|S))},$$
where $\beta_1$ is the mean curvature of $\partial H$, and
$\beta_0(y|H)$ and $\beta_0(y|S)$ are the signed distances from $y$ to $\partial H$ and $\partial S$, respectively. Under certain assumptions, its null
distribution is uniform  over $(0,1)$~\citep{terada2017selective}, i.e., we have
$$
p(H|y,S)\, |\, y\in S,\, \mu \in \partial H \sim \mathcal{U}(0,1).
$$
The calculation of $p(H|y,S)$ may again be non-trivial.
The difficulty arises from
the calculation of the signed distance $\beta_0(y|S)$ for the selective region $S$ but fortunately, we can apply the same idea
as non-selective multiscale bootstrap as mentioned in previous subsection. In this case, another regression model $\varphi_S(\gamma^2)$ is fitted with the bootstrap probabilities $\bs_{\gamma^2}(S)$ for the region $S$. It can be seen that the
signed distance $\beta_0(y|S)$ can then be obtained by extrapolating the model $\varphi_S(\gamma^2)$ to $\gamma^2=0$.
In other words, the selective $p$-value $p(H|y,S)$ can be calculated using two regression models $\varphi_H(\cdot)$ and $\varphi_S(\cdot)$ as follows,
\begin{equation*}
p(H|y,S) = \frac{\Bar{\Phi}(\varphi_H(-1))}{\Bar{\Phi}(\varphi_H(-1)+\varphi_S(0))}.
\end{equation*}
\begin{algorithm}[]
    \caption{Selective Multiscale Bootstrap}
    \label{msboot}
    \begin{algorithmic}[1] 
        \Procedure{SelectiveMS}{$\mathds{1}_H, \mathds{1}_S, \alpha,\,\bm\mu,\bm\Sigma$}
        \For{$n'\in \mathcal{M}$ }
                \State $\gamma^2_{n'} \gets \frac{n}{n'}$
                \State Sample $\{\bm{y}^*_i\}^B_{i=1} \overset{i.i.d.}{\sim} \mathcal{N}(\bm\mu,\, \gamma^2_{n'}\bm\Sigma)$
                \State $\bs_{\gamma_{n'}^2}(H) \gets \sum^B_{i=1}\mathds{1}_H(\bm{y}^*_i)/B$
                \State $\bs_{\gamma_{n'}^2}(S) \gets \sum^B_{i=1}\mathds{1}_S(\bm{y}^*_i)/B$
            \EndFor
            \State Fit a model $\varphi_H(\gamma^2)$ such that $\varphi_H(\gamma^2)=\gamma\bar{\Phi}^{-1}(\bs_{\gamma^2}(H))$.
            \State Fit a model $\varphi_S(\gamma^2)$ such that $\varphi_S(\gamma^2)=\gamma\bar{\Phi}^{-1}(\bs_{\gamma^2}(S))$.\\
            \Return $\Bar{\Phi}(\varphi_H(-1))/\Bar{\Phi}(\varphi_H(-1)+\varphi_S(0))$
        \EndProcedure
    \end{algorithmic}
\end{algorithm}
Algorithm \ref{msboot} describes selective multiscale bootstrap algorithm where
$\mathds{1}_H(y)$ is an indicator
function: it is $1$ if $y \in H$, otherwise $0$. 
%

%
\section{SELECTIVE INFERENCE WITH MMD}
\label{sec:si_mmd}
In this section, we propose our first test. We are concerned with the
following problem,

\begin{problem}
Given two distributions $P$ and $Q$ with common support on $\mathcal{X}^d$, we have $n$ i.i.d.\  samples denoted as $\bm{X}_n = [\bm{x}_1, \dots, \bm{x}_n]^\top \in \mathcal{X}^{n\times d}$
with $\bm{x} \sim P$ and similarly for $\bm{Y}_n$ with $\bm{y} \sim Q$.
Our goal
is to find a set of features $\mathcal{S}$ such that for $i \in \mathcal{S}$, the marginal distributions of the $i$-th dimension of $\bm{x}$ and $\bm{y}$ (denoted as $P^{(i)}$ and $Q^{(i)}$) are significantly different,
i.e., $P^{(i)}  \neq Q^{(i)}$.
\end{problem}

The problem and a solution were initially proposed by \citet{yamada2018post}
based on the polyhedral lemma for post-selection inference. Their proposal
mmdInf, which is referred to as PolyMMD in this paper, first selects a set of
$k$ features $\mathcal{S}_k$ using MMD
and then tests if each of the selected features' marginal distributions are
different. The latter part is performed by conditioning on selecting the
whole set $\mathcal{S}_k$. This form of conditioning can be written
equivalently as a set of linear constraints \citep[section
3.1]{yamada2018post} and as a result, it is possible to employ the polyhedral
lemma and obtain a truncated normal as their asymptotic null distribution. However, we
can relax the conditioning further. Notice that the goal is to test each
feature $i \in \mathcal{S}_k$ separately.
Thus, given a significance level $\alpha$, it is sufficient to require
type-I error to be no larger than $\alpha$, conditioned only on $i \in
\mathcal{S}_k$, rather than on the full set $\mathcal{S}_k$.
Following \cite{liu2018more}, we call this event the minimal conditioning set.
While the selection event of 
PolyMMD can be written as a single polyhedron, the selection event  $i \in
\mathcal{S}_k$ is more complicated.

In this section, we propose MultiMMD a more powerful variant of PolyMMD by conditioning on the minimal conditioning set. We show how the statistical test can be
performed using multiscale bootstrap. Although for the remainder of
the section we focus on the incomplete estimator $\widehat{\mathrm{MMD}}^2_{Inc}$, a similar procedure can be applied to the block estimator $\widehat{\mathrm{MMD}}^2_{Blo}$ and the linear time estimator $\widehat{\mathrm{MMD}}^2_{l}$.
%
%
\subsection{Proposal: \textrm{MultiMMD}}
From the index set of all features $\mathcal{I}$, \textrm{MultiMMD} finds a subset of $k$ features, denoted by $\mathcal{S}_k \subseteq \mathcal{I}$, that differentiates samples from $P$ and $Q$. The $k$ features are selected as the $k$ dimensions with the highest scores measured by an estimator of \textrm{MMD}.
More precisely, we have $\mathcal{S}_k = \mathcal{S}_{k-1} \cup \{ \arg\max_{i \in \mathcal{I} \setminus \mathcal{S}_{k-1}}
\widehat{\mathrm{MMD}}^2_{Inc} (\bm{X}_n^{(i)}, \bm{Y}_n^{(i)} )\}$ where $\bm{X}_n^{(i)}=[\bm{x}^{(i)}_1,\dots,\bm{x}^{(i)}_n]^\top$ and $\bm{x}^{(i)}$ is the $i$-th dimension of the random variable $\bm{x}$ (and similarly for $\bm{Y}_n^{(i)}$) and $\mathcal{S}_0 = \emptyset$.

The selection procedure mentioned above for MultiMMD is the same as PolyMMD, but the statistical test we perform is different. For each selected feature $i \in \mathcal{S}_k$, 
the hypothesis test we execute is
\begin{align*}
    H_{0,i}: \mathrm{MMD}^2(P^{(i)}, Q^{(i)}) = 0\ |\ i \in \mathcal{S}_k \text{ is selected}, \\
    H_{1,i}: \mathrm{MMD}^2(P^{(i)}, Q^{(i)}) > 0\ |\ i \in \mathcal{S}_k \text{ is selected}.
\end{align*}
In contrast to PolyMMD, when testing $H_{0,i}$ for
some $i \in \mathcal{S}_k$, all 
the other selected variables in $\mathcal{S}_k\setminus \{i\}$
are not considered for conditioning.
The justification of why tests that condition on the minimal
conditioning set is more powerful can be found in \citet{fithian2014optimal}. The main idea is that
the monotonicity of the selective type I error, defined as $\mathbb{P}(\mathrm{Reject\,} H_{0,i} \le \alpha\, |\,H_{0,i}, i \in \mathcal{S}_k)$,  suggests that
we will lose power when we move from coarse selection variables to finer
selection variables \citep[Proposition
3]{fithian2014optimal}.

Multiscale bootstrap's flexibility in representing hypothesis and selective regions with indicator functions makes it a suitable candidate to calculate $p$-values.
It requires us to define a set $\mathcal{M}$ with each of its members $n' \in \mathcal{M}$ specifying the number of elements to be resampled from  $\bm{X}_n$ and $\bm{Y}_n$ (denoted as $\bm{X}_{n'}$ and $\bm{Y}_{n'})$.  
We  generate bootstrap replicates of statistic
$\hat{\bm{T}}_{n}(\bm{X}_{n'},
\bm{Y}_{n'}):=\sqrt{l_n}[\widehat{\mathrm{MMD}}^2_{Inc} (\bm{X}_{n'}^{(1)},
\bm{Y}_{n'}^{(1)}),\dots, \widehat{\mathrm{MMD}}^2_{Inc} (\bm{X}_{n'}^{(d)},
\bm{Y}_{n'}^{(d)})]^\top$ where $l_n = |\mathcal{D}_n|$ is the denominator used
for $\widehat{\mathrm{MMD}}^2_{Inc}  (\bm{X}_{n}^{(i)}, \bm{Y}_{n}^{(i)})$. For each $n' \in \mathcal{M}$, the statistic is
computed $B$ times and the
bootstrap probability $\bs_{\gamma^2}(\cdot)$ of the hypothesis region $H$ (and selective region $S$) is the average 
number of the $B$ samples that falls within $H$ (and $S$). As a result, we require a 
sampler for
$\hat{\bm{T}}_{n}(\bm{X}_{n'}, \bm{Y}_{n'})$ for all $n' \in \mathcal{M}$ and
a function that describes whether the statistic falls within the regions $H$ and $S$.
Finally, two linear regression models are fitted: one for $H$ and one for $S$
denoted as $\varphi_H(\cdot)$ and $\varphi_S(\cdot)$ respectively. Assuming that the boundary surfaces can be represented by a polynomial of degree 3, then the existing theory recommends a linear model \citep[Section 5.4]{shimodaira2008testing}.
The model's predictor variable is the ratio $\gamma^2_{n'} := \frac{n}{n'}$ and its response variable
is $\gamma_{n'}\bar{\Phi}^{-1}(\bs_{\gamma^2_{n'}}(\cdot))=\beta_0+\gamma^2_{n'}\beta_1$ where $\beta_0$ is the signed distance from our statistic to the boundary of the region and $\beta_1$ the mean curvature at the boundary.

 \begin{algorithm}[]
    \caption{MultiMMD($\bm{X}_n,\, \bm{Y}_n, k,\, \mathcal{M}$): Selective $p$-values for the null hypothesis $H_{0,i}: \mathrm{MMD}^2(P^{(i)}, Q^{(i)}) = 0\ |\ i \in \mathcal{S}_k \text{ is selected}$.}
    \label{alg:multimmd}
    \begin{algorithmic}[1] 
            \State $\hat{\bm{T}}_{n}(\bm{X}_n, \bm{Y}_n), \hat{\bm\Sigma} \gets $EstimateParam($\bm{X}_n$, $\bm{Y}_n$)
            \State $\mathcal{S}_k \gets $ the indexes
        of $k$ largest values of $\{\widehat{\mathrm{MMD}}^2(\bm{X}_n^{(i)}, \bm{Y}_n^{(i)})\}_{i\in\mathcal{I}}$
            \For{$i \in \mathcal{S}_k$}
                \For{$n' \in \mathcal{M}$}
                    \State $\gamma^2_{n'} \gets \frac{n}{n'}$
                    \State Sample $\{\bm{y}^*_i\}^B_{i=1} \overset{i.i.d.}{\sim} \mathcal{N}(\hat{\bm{T}}_{n}(\bm{X}_n, \bm{Y}_n),\, \gamma^2_{n'}\hat{\bm\Sigma})$
                    \State $\bs_{\gamma_{n'}^2}(S) \gets \sum^B_{i=1}\mathds{1}^{(i)}_S(\bm{y}^*_i)/B$
                \EndFor
                \State Fit a linear model $\varphi_S(\gamma^2)$ such that $\varphi_S(\gamma^2)=\gamma\bar{\Phi}^{-1}(\bs_{\gamma^2}(S))$.
                \State $\hat{\beta}_0^{(i)} \gets \hat{\sigma}_i^{-1}\sqrt{l_n}\widehat{\mathrm{MMD}}^2_{Inc}(\bm{X}_n^{(i)}, \bm{Y}_n^{(i)})$
                \State $p_i \gets \Bar{\Phi}(\hat{\beta}_0^{(i)})/\Bar{\Phi}(\hat{\beta}_0^{(i)}+\varphi_S(0))$
            \EndFor
            \State \Return $\{p_i\}_{i=0}^k$ and $\mathcal{S}_k$
    \end{algorithmic}
\end{algorithm}
We begin by describing how to obtain samples of $\hat{\bm{T}}_{n}(\bm{X}_{n'}, \bm{Y}_{n'})$ for all $n' \in \mathcal{M}$. Suppose that the bootstrap resamples can be represented using the distribution
$\mathcal{N}(\hat{\bm{T}}_{n}(\bm{X}_n, \bm{Y}_n),\hat{\bm\Sigma})$ where $\hat{\bm\Sigma}$ be the sample
covariance of $\widehat{\mathrm{MMD}}^2_{Inc}$, i.e., $\hat{\bm\Sigma}:=\frac{1}{l_n-1}\sum_{(i,j) \in \mathcal{D}_n} [\bm{h}(\bm{z}_i,\bm{z}_j) - \overline{\bm{h}}][\bm{h}(\bm{z}_i,\bm{z}_j) - \overline{\bm{h}}]^\top$ where (recall that $h(\cdot,\cdot)$ is the U-statistic kernel) $\bm{h}(\bm{z}_i,\bm{z}_j) := [h(\bm{z}^{(1)}_i,\bm{z}^{(1)}_j), \ldots, h(\bm{z}^{(d)}_i,\bm{z}^{(d)}_j)]^\top \in \mathbb{R}^d$ and $\overline{\bm{h}} := \frac{1}{l_n}\sum_{(i,j) \in \mathcal{D}_n} \bm{h}(\bm{z}_i,\bm{z}_j)$. The choice of normal distribution is justified as $\hat{\bm{T}}_{n}(\bm{X}_n, \bm{Y}_n)$ tends to be normally distributed as $n \rightarrow \infty$ \citep[Theorem 5]{yamada2018post}.
In order to replicate samples of $\hat{\bm{T}}_n(\bm{X}_{n'},\bm{Y}_{n'})$, notice that
its asymptotic distribution is $\hat{\bm{T}}_n(\bm{X}_{n'},\bm{Y}_{n'}) \sim
\mathcal{N}(\hat{\bm{T}}_{n}(\bm{X}_n, \bm{Y}_n), \frac{l_n}{l_{n'}}\hat{\bm\Sigma})$.
For each $n'$, instead of resampling $n'$ elements from $\bm{X}_n$ and $\bm{Y}_n$ for calculating $\hat{\bm{T}}_n(\bm{X}_{n'},\bm{Y}_{n'})$ $B$ times,  we generated $B$ replicates directly from $\mathcal{N}(\hat{\bm{T}}_{n}(\bm{X}_n, \bm{Y}_n), \frac{l_n}{l_{n'}}\hat{\bm\Sigma})$ which is then used to calculate bootstrap probabilities. 
The former is an $\mathcal{O}(n'B)$ process while the latter is $\mathcal{O}(B)$.
In practice, the $B$ replicates are sampled from $\mathcal{N}(\hat{\bm{T}}_{n}(\bm{X}_n, \bm{Y}_n), \frac{n}{n'}\hat{\bm\Sigma})$ instead because we let $l_n$ and $l_{n'}$ be the typical choice of $l_{n} = rn$ and $l_{n'} = rn'$
where $r$ is fixed apriori with $0 < r < \infty$.
The choice of $r$ affects the distribution of $\hat{\bm{T}}_{n}(\cdot, \cdot)$ \citep[section 4]{yamada2018post}. When $r$ is high, its asymptotic distribution tends towards its complete counterpart (i.e., infinite
sums of weighted chi-squared variables). But when $r$ is small, 
it is normally distributed.

For each $n'$, the $B$ replicates are used to calculate bootstrap probabilities
for both the hypothesis region $\bs_{\frac{n}{n'}}(H)$ and the
selective region $\bs_{\frac{n}{n'}}(S)$.
Note that the hypothesis $H_{0,i}$ can be written as the region $H = \{y\in
\mathbb{R}^d : y^{(i)} \le 0\}$ which has a flat boundary.  This means that the
curvature is $\beta_1 = 0$ and multiscale
bootstrap is not needed for the hypothesis region $H$. In fact, the signed distance for testing
$i\in \mathcal{S}_k$
is $\hat{\beta}^{(i)}_0 =
\hat{\sigma}^{-1}_{i}\sqrt{l_n}\widehat{\mathrm{MMD}}^2_{Inc}(\bm{X}_n^{(i)},
\bm{Y}_n^{(i)})$ where $\hat{\sigma}^{2}_{i}$ is the $i$th diagonal element of
$\hat{\bm\Sigma}$. We have
$\varphi_H(\gamma^2)=\hat{\beta}^{(i)}_0$ as a constant function.

However, it
is not as easy for the selective region $S$ which requires the application of
multiscale bootstrap. $S$ is represented by an indicator function 
$
    \mathds{1}_S^{(i)}(\bm{y}^*) = 
    \begin{cases}
    1 & \text{if } i \in \mathcal{S}^*_k \\
    0 & \text{if } i \notin \mathcal{S}^*_k
    \end{cases}
    $
where $\mathcal{S}^*_k$ is the selected set of
$k$ features where our selection algorithm is applied to $\bm{y}^*$. Let $\gamma_{n'}^2=\frac{n}{n'}$ then, the bootstrap probability is given
by $\bs_{\gamma_{n'}^2}(S) = \sum^B_{i=1}\mathds{1}^{(i)}_S(\bm{y}^*_i)/B$ where $\{\bm{y}^*_i\}^B_{i=1} \overset{i.i.d.}{\sim} \mathcal{N}(\hat{\bm{T}}_{n}(\bm{X}_n, \bm{Y}_n),\, \gamma^2_{n'}\hat{\bm\Sigma})$.
 For a given $\mathcal{M}=\{n'\}$, we have $|\mathcal{M}|$ pairs of predictor and response $\{(\gamma^2_{n'},\gamma_{n'}\bar{\Phi}^{-1}(\bs_{\gamma^2_{n'}}(S)))\}_{n'\in\mathcal{M}}$ that is used to fit a linear model $\varphi_S(\gamma^2)$. We define $\mathcal{M}$ to be the set of numbers equally spaced between $0.5n$ to $2n$ in log space with $|\mathcal{M}| = 10$. The function $\varphi_S(\gamma^2)$ can be used to extrapolate to $\gamma^2=0$ to obtain the signed distance from our statistic to the boundary of $S$. 
 Then, our selective $p$-value  for feature
 $i\in \mathcal{S}_k$ is given by 
 $$p_i = \Bar{\Phi}(\hat{\beta}_0^{(i)})/\Bar{\Phi}(\hat{\beta}_0^{(i)}+\varphi_S(0)).$$
 We reject $H_{0,i}$ if $p_i < \alpha$. The algorithm is described in Algorithm \ref{alg:multimmd}.
\section{SELECTIVE INFERENCE WITH HSIC}
\label{sec:si_hsic}

We consider the problem studied in \citet{yamada2016post}

\begin{problem}
\label{prob:feat_sel}
Given $n$ samples from the joint distribution $\{(\bm{x}_i,\bm{y}_i)\}^n_{i=1}\overset{i.i.d.}{\sim}P_{\bm{x}\bm{y}}$ on the domain $\mathcal{X}^d \times \mathcal{Y}$,
our goal is to find a subset $\mathcal{S}$ of features of $\bm{x}$ such that 
for each $i \in \mathcal{S}$ there is statistically significant
dependency between the feature $\bm{x}^{(i)}$ and response $\bm{y}$.
\end{problem}

The goal is to decide if there is some dependence between the marginal distribution $\bm{x}^{(i)}$ and the response $\bm{y}$. Whereas in MultiMMD, it compares if the difference between the marginal distributions of $\bm{x}^{(i)}$ and
$\bm{y}^{(i)}$ is zero, i.e., if
$\mathrm{MMD}^2(\bm{x}^{(i)},\bm{y}^{(i)})=0$. In \citet{yamada2016post}, a
solution was proposed for Problem \ref{prob:feat_sel} using HSIC to measure the dependency between the two between $\bm{x}^{(i)}$ and $\bm{y}$ which we call ``PolyHSIC'' (previously called hsicInf). It begins with first selecting $k$
features with the highest HSIC scores and then a test is performed for each
feature. But the
conditioning is not minimal and suffers from a loss in power. 

In this section, we propose a new estimator based on the incomplete
U-statistic estimator for HSIC and analyse its asymptotic distribution. We
then extend PolyHSIC to
``MultiHSIC''. The new proposal conditions on the minimal conditioning
set which is made possible with the multiscale bootstrap. 
Although our procedure allows
the use of the HSIC block estimator \citep{zhang2018large}, 
we observe that the convergence of the estimator to the target normal
distribution highly depends on the block size and in turn the number of blocks, which
can be challenging to set correctly. As a result, the false rejection rate of
the test can be difficult to control. See Appendix \ref{sec:appen_bias} for a
numerical simulation that illustrates this problem, and \citet[page
5]{zaremba2013b} for a discussion on a similar issue in the block estimator for MMD.

\subsection{Incomplete HSIC}
\label{sec:inchsic}

We propose
an estimator of HSIC that behaves as desired with the type I error at size $\alpha$ (unlike the block estimator) when used in conjunction with multiscale bootstrap. It based on the 
incomplete U-statistic estimator \citep{blom1976some, janson1984asymptotic} defined as
$$
\widehat{\mathrm{HSIC}}_{Inc}(\bm{z})= \frac{1}{l}\sum_{(i,j,q,r) 
\in \mathcal{D}} h(i,j, q,r)
$$
where $\bm{z} = [z_1,\dots,z_n]$ of $n$ i.i.d.\ draws from $z:=(x,y)\sim P_{xy}$,
$l=|\mathcal{D}|$, and $\mathcal{D}$ is the \textit{design} of the matrix
and for $\widehat{\mathrm{HSIC}}_{Inc}$ it is constructed randomly by sampling $l$ terms with replacement from $\bm{i}^n_4$. 

The asymptotic distribution of  $\mathrm{\widehat{HSIC}}_{Blo}$ is normal in both cases when $y$ and $x$ are independent and dependent \citep[section 3.2]{zhang2018large}.
As shown in Corollary \ref{thm:asympt_inchsic2}, it also follows that
$\widehat{\mathrm{HSIC}}_{\text{Inc}}$ is asymptotically normal regardless of
the presence of the dependency between $x$ and $y$ (in Appendix
\ref{sec:append_emp_dist}, 
we empirically validate this claim).

\begin{restatable}[Asymptotic Distribution of $\widehat{\mathrm{HSIC}}_{\text{Inc}}$]{corollary}{asymptinchsico}
\label{thm:asympt_inchsic2}
Assume that $\lim_{n,l \rightarrow \infty} n^{-2}l = 0$ and $0 < \lim_{n,l \rightarrow \infty} n^{-1}l = \lambda < \infty$,
\begin{itemize}
    \item If $X \independent Y$, then $l^{\frac{1}{2}}\widehat{\mathrm{HSIC}}_{\text{Inc}}(\bm{z}) \overset{d}{\rightarrow} \mathcal{N}(0,\sigma^2)$,
    \item If $X \notindependent Y$, then $l^{\frac{1}{2}}(\widehat{\mathrm{HSIC}}_{\text{Inc}}(\bm{z}) -\mathrm{HSIC}(P_{xy})) \overset{d}{\rightarrow} \mathcal{N}(0,\lambda\sigma_u^2+\sigma^2)$,
\end{itemize}
where $\sigma^2=\mathrm{Var}[h(i,j,q,r)]$ and $\sigma_u^2$ is the variance of the 
complete U-statistic counterpart, see \citet[Theorem 5]{song2012feature}.
\end{restatable}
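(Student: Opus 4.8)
The plan is to realize $\widehat{\mathrm{HSIC}}_{\text{Inc}}$ as a generic incomplete U-statistic with a random design sampled with replacement, and then invoke the classical limit theory for such objects (Blom, Janson). Write $U_n := \widehat{\mathrm{HSIC}}_u(\bm{z})$ for the complete U-statistic of order $4$ with kernel $h$, and let $\widehat{\mathrm{HSIC}}_{\text{Inc}} = l^{-1}\sum_{(i,j,q,r)\in\mathcal{D}} h(i,j,q,r)$ where $\mathcal{D}$ consists of $l$ i.i.d.\ uniform draws (with replacement) from $\bm{i}^n_4$. The standard device is the decomposition
\begin{equation*}
\widehat{\mathrm{HSIC}}_{\text{Inc}} - \theta = \bigl(U_n - \theta\bigr) + \bigl(\widehat{\mathrm{HSIC}}_{\text{Inc}} - U_n\bigr),
\end{equation*}
where $\theta = \mathrm{HSIC}(P_{xy})$, and to show that after scaling by $l^{1/2}$ the two terms are asymptotically independent Gaussians (or that one of them vanishes), so the limit variance is the sum of the two pieces. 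The first term is the fluctuation of the complete U-statistic; the second is the extra noise injected by subsampling the design.

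First I would handle the second (sampling) term. Conditionally on $\bm{z}$, $\widehat{\mathrm{HSIC}}_{\text{Inc}} - U_n$ is an average of $l$ i.i.d.\ centered terms $h(i,j,q,r) - U_n$, each with conditional variance $\widehat{\sigma}^2_n := \binom{n}{4}^{-1}\sum h(\cdot)^2 - U_n^2$, which converges a.s.\ to $\sigma^2 = \mathrm{Var}[h(i,j,q,r)]$ (here $h$ is applied to four i.i.d.\ draws). A conditional CLT (Lindeberg, checking the conditional Lindeberg condition via a moment bound on $h$ — this is where I would assume $\mathbb{E}h^2<\infty$, or more if needed) gives $l^{1/2}(\widehat{\mathrm{HSIC}}_{\text{Inc}} - U_n)\mid\bm{z} \overset{d}{\to}\mathcal{N}(0,\sigma^2)$. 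Next, the first term: under $X\notindependent Y$ the kernel is non-degenerate, so the Hoeffding CLT gives $n^{1/2}(U_n-\theta)\overset{d}{\to}\mathcal{N}(0,16\zeta_1)$ with $\zeta_1$ the first-order Hoeffding variance (this is $\sigma_u^2$ in the notation of Song et al., Theorem 5); scaling by $l^{1/2}=\lambda^{1/2}n^{1/2}(1+o(1))$ turns this into variance $\lambda\sigma_u^2$. Under $X\independent Y$ the U-statistic is degenerate, $U_n-\theta = O_P(n^{-1})$, so $l^{1/2}(U_n-\theta) = O_P(l^{1/2}/n) = O_P(n^{-2}l)^{1/2}\cdot O_P(1)\to 0$ by the assumption $n^{-2}l\to 0$; hence the first term is negligible and only the sampling noise $\sigma^2$ survives.

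Finally I would assemble the two pieces: condition on $\bm{z}$, note that the conditional law of $l^{1/2}(\widehat{\mathrm{HSIC}}_{\text{Inc}}-U_n)$ converges to a fixed Gaussian $\mathcal{N}(0,\sigma^2)$ not depending on the realization (in the limit), while $l^{1/2}(U_n-\theta)$ is $\bm{z}$-measurable with its own unconditional Gaussian (or $o_P(1)$) limit; a standard argument (characteristic functions, dominated convergence over the conditioning) then yields joint convergence to the \emph{independent} sum, giving total variance $\lambda\sigma_u^2 + \sigma^2$ in the dependent case and $\sigma^2$ in the independent case. I expect the main obstacle to be the rigorous asymptotic-independence / mixing-of-conditional-limits step — making precise that the conditional CLT for the sampling term combines with the unconditional CLT for $U_n$ into a joint statement — together with verifying the conditional Lindeberg condition uniformly enough; both are handled in the incomplete U-statistic literature (\citealp{janson1984asymptotic}), so the cleanest route is to state the result as a corollary of Janson's theorem after checking its hypotheses (the design is "random with replacement", $l/n\to\lambda$, $l/n^2\to0$), which is presumably exactly what the paper's own proof does.
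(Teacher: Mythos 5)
Your proposal is correct, and it essentially re-derives the general incomplete-U-statistic limit theorem rather than quoting it. The paper's own proof is a two-line case distinction: it states (as a separate theorem in the appendix, credited to \citet[Corollary 1]{janson1984asymptotic} and \citet[Theorem 1, Section 4.3.3]{lee2019u}) the trichotomy for $l^{1/2}(\widehat{\mathrm{HSIC}}_{\text{Inc}}-\mathrm{HSIC})$ governed by $\lambda_c:=\lim n^{-c}l$, where $c$ is the degeneracy order of the HSIC kernel, and then simply observes that under independence $h_1=0$ so $c\ge 2$ and the hypothesis $n^{-2}l\to 0$ puts you in the $\lambda_c=0$ regime (pure sampling noise $\mathcal{N}(0,\sigma^2)$, with $\mathrm{HSIC}=0$ so no centering is needed), while under dependence $c=1$ and $0<\lim n^{-1}l=\lambda<\infty$ puts you in the mixed regime $\lambda^{1/2}V+T$, with $V$ the $\mathcal{N}(0,\sigma_u^2)$ limit of the complete U-statistic from \citet[Theorem 5]{song2012feature} and $T\sim\mathcal{N}(0,\sigma^2)$ independent of it. Your decomposition $\widehat{\mathrm{HSIC}}_{\text{Inc}}-\theta=(U_n-\theta)+(\widehat{\mathrm{HSIC}}_{\text{Inc}}-U_n)$, the conditional CLT for the with-replacement design, the negligibility bound $l^{1/2}(U_n-\theta)=O_P(l^{1/2}/n)\to 0$ in the degenerate case, and the asymptotic-independence step are exactly the ingredients behind that cited theorem, so what you have written is a self-contained proof sketch of the lemma the paper treats as a black box; it makes transparent where the two variance contributions $\sigma^2$ and $\lambda\sigma_u^2$ come from and why $n^{-2}l\to 0$ is the right condition to kill the degenerate U-statistic fluctuation, at the cost of having to make the conditional Lindeberg check and the mixing of conditional and unconditional limits rigorous (which, as you note, is precisely what Janson's result supplies). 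Either route is acceptable; yours is more informative, the paper's is shorter but leans entirely on the external reference.
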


A measure of its performance is asymptotic relative efficiency ($\mathrm{ARE}$) \citep{lee2019u}
of the incomplete estimator with respect its complete counterpart
\begin{align*}
\mathrm{ARE} & =\lim_{n\rightarrow\infty}\mathrm{Var}(\widehat{\mathrm{HSIC}}_{Inc})/\mathrm{Var}(\widehat{\mathrm{HSIC}}_{u})\\
 & =\lim_{n\rightarrow\infty}\frac{(\lambda\sigma_{u}^{2}+\sigma^{2})/l}{\sigma_{u}^{2}/n}
 =1+\frac{\sigma^{2}}{r\sigma_{u}^{2}},
\end{align*}
where a common choice of $l$ is chosen to be $rn$ for some $r>0$. This means that for large $r$ the incomplete estimator is
asymptotically efficient and dependent on the ratio
$\frac{\sigma^2}{\sigma_u^2}$ i.e., if we set $r$ to be big,
we have $\mathrm{ARE} \approx 1$. A similar analysis can be
performed for the incomplete estimator for the MMD which suggests we should
take $r$ to be very high but it would violate our assumption that $\lim_{n\rightarrow \infty} n^{-1}l < \infty$ 
so the
estimator will deviate from normal.

\subsection{Proposal: MultiHSIC}

In this section, we outline \textrm{MultiHSIC} as a more powerful method for
variable selection by considering
the minimal conditioning set. The algorithm begins by selecting $k$ features with the $k$ largest 
$\widehat{\mathrm{HSIC}}$ scores and then performing a test after selection. We have 
\begin{align*}
    H_{0,i}: \textrm{HSIC}(P_{\bm{x}^{(i)}\bm{y}}) = 0\ |\  i \in \mathcal{S}_k \text{ is selected}, \\
    H_{1,i}: \textrm{HSIC}(P_{\bm{x}^{(i)}\bm{y}}) > 0\ |\ i \in \mathcal{S}_k \text{ is selected},
\end{align*}
where $P_{\bm{x}^{(i)}\bm{y}}$ is defined as the joint distribution between
$\bm{x}^{(i)}$ and $\bm{y}$. Define $\bm{Z}^{(i)} = [\bm{z}^{(i)}_1,\dots,
\bm{z}^{(i)}_n]$ and $\bm{z}^{(i)} := (x^{(i)}, y)$.
Let $\mathcal{S}_k = \mathcal{S}_{k-1} \cup \{ \arg\max_{i \in \mathcal{D}
\setminus \mathcal{S}_{k-1}} \widehat{\mathrm{HSIC}}_{Inc}(\bm{Z}^{(i)})\}$ and $\mathcal{S}_0 = \emptyset$.
%

%
\textbf{Related Work:} \cite{slim2019kernelpsi} proposed a class of kernel
based statistics for selecting variables which can later be used for
hypothesis testing. Their selective inference algorithm includes a sampler
for simulating the null hypothesis. While we have focused on feature
selection algorithms that condition on selection events to solve Problem
\ref{prob:feat_sel}, there is another branch of selective inference
procedures based on the knockoff filter \citep{barber2019knockoff} which has
been extended to the high dimensional setting \citep{candes2018panning}.
Their framework is similar in the sense that they too first select promising
features and then provide selective guarantee on the inference made on the
selected variables. However, their guarantees are based on generating
convincing
knock-offs variables such that the joint distribution is invariant between swaps of the variables and its knockoffs, which can be hard
\citep{lu2018deeppink, romano2019deep}. These proposals control false discovery rate \citep{benjamini1995controlling}.
\section{EXPERIMENTS}
\label{sec:experiments}
\setlength{\belowcaptionskip}{-5pt}
In this section, we demonstrate our proposed method for both toy and real world
datasets. The performance of our algorithm is measured by true positive rate
(TPR) and false positive rate (FPR) which can be thought of as power and type-I
error. TPR is defined to be the portion of true selected features that are correctly declared as such and FPR quantifies the portion of selected false features that are declared as incorrectly significant (see definitions in Section \ref{sec:tpr_fpr}). 
It is desirable to have high true positive rate and false positive rate to be
controlled at $\alpha$ (it is not desirable for this to be below $\alpha$ or
above $\alpha$) since the threshold is chosen to be such that the type-I error is size $\alpha$. Unless specified otherwise, we use the Gaussian kernel with its bandwidth chosen with the median heuristic.

Our first experiment considers several synthetic problems to evaluate our proposal and verify that our test controls FPR at nominal levels. For MMD, we use the mean shift problem varying both $n$ and $d$ and, for HSIC, we consider the logistic problem. Then, we proceed to using several real world data-sets which have been augmented with artificial and independent features. We consider the original preprocessed features as ``true'' features which allows us to calculate TPR and FPR. For MMD, we split the data-set into two sets for two different classes and the goal here is to ``rediscover'' 
the original features (with a minimal number of artificially added and uninformative features). And for HSIC,
the data-set is split into the predictor 
variables (with some fakes) and the response variable, the goal here is to find the original predictors. For our final experiment, we consider the problem of anomalous dataset detection where $d$ is small (and so $k$ is small too). In this scenario, our algorithm only has incremental increase in power. Additional experiments can be found in the Appendix \ref{sec:appen_exp}. Code for reproducing our results is available online: \url{https://github.com/jenninglim/multiscale-features}.
\subsection{Toy Problems}
\begin{figure}
\centering
\includegraphics[width=80mm]{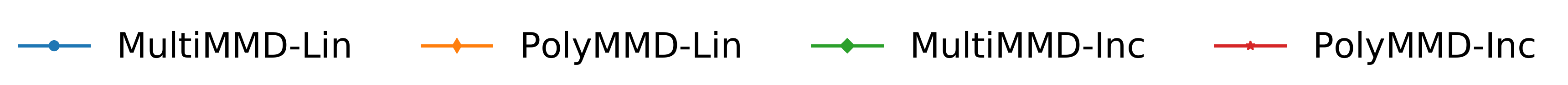}
\subfloat{\includegraphics[width=40mm]{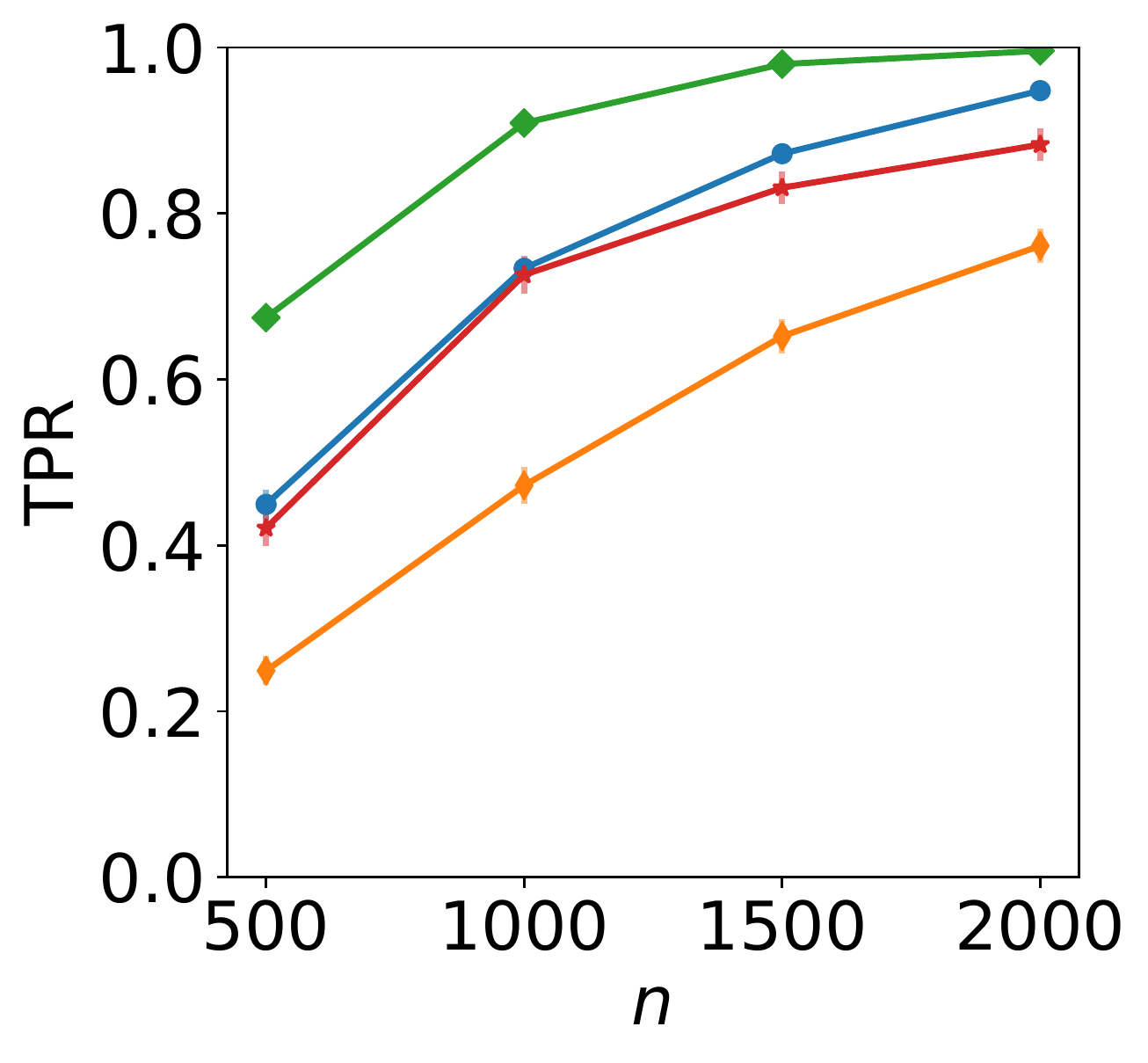}}
\subfloat{\includegraphics[width=40mm]{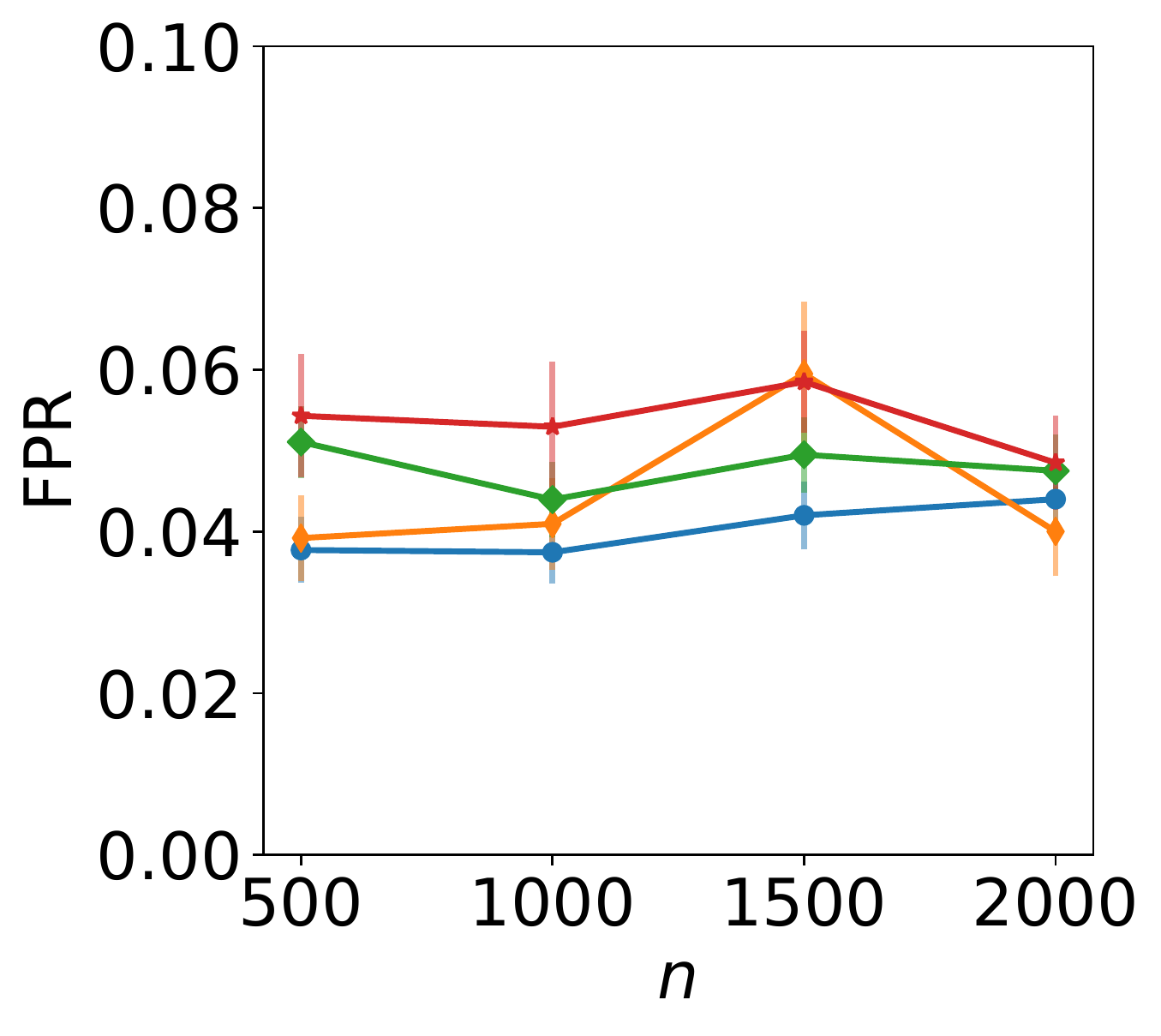}}
\caption{Mean shift experiment as $n$ increases. Results are shown for $\mathrm{\widehat{MMD}}^2_l$ and $\mathrm{\widehat{MMD}}^2_{Inc}$.}
\label{fig:ex1}
\end{figure}
\begin{figure}
\centering
\includegraphics[width=80mm]{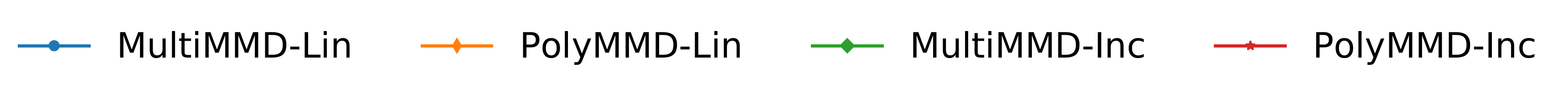}
\subfloat{\includegraphics[width=40mm]{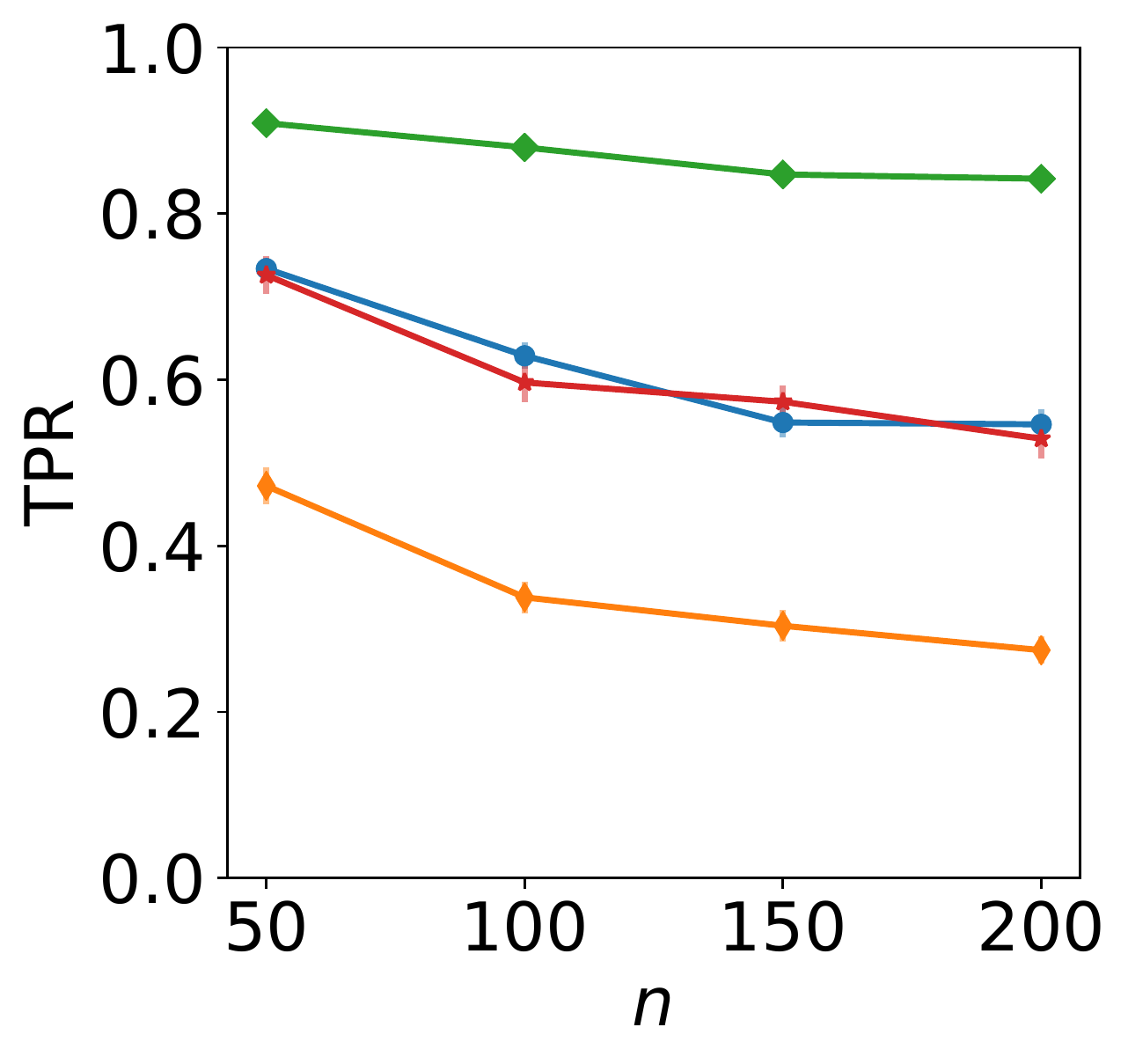}}
\subfloat{\includegraphics[width=40mm]{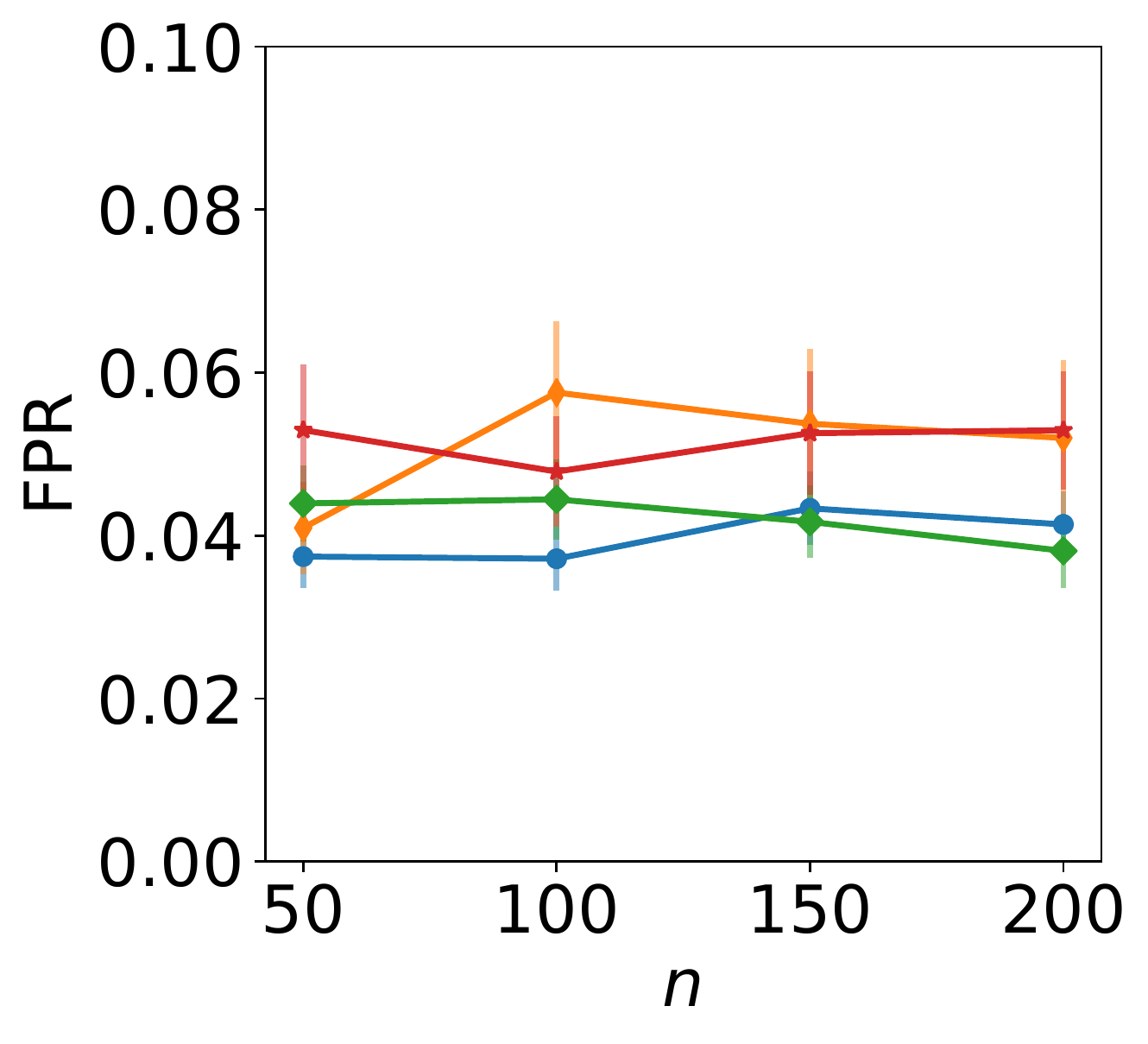}}
\caption{Mean shift experiment as $d$ increases. Results are shown for $\mathrm{\widehat{MMD}}^2_l$ and $\mathrm{\widehat{MMD}}^2_{Inc}$.}
\label{fig:ex2}
\end{figure}
The aim of these synthetic experiment is to evaluate our proposals, \textrm{MultiMMD} and \textrm{MultiHSIC},
against previously proposed methods and empirically verify the theoretical guarantees. The TPR and FPR are averaged over $100$ trials, $k =30$ and $\alpha=0.05$. We consider the three scenarios.

\textbf{MMD: Mean Shift with varying $n$} $(d=50)$. We are given $n$ samples from $P= \mathcal{N}(\bm{0}, \bm{I})$ and $Q= \mathcal{N}(\bm{\mu}, \bm{I})$ where $\mu = [ \bm{0.5}_{10},\  \bm{0}_{40}]^\top \in \mathbb{R}^{50}$. For the first ten rows the alternative holds while for the rest the test should not reject the null hypothesis. This problem was studied in \citet{yamada2018post} and the results are shown in Figure \ref{fig:ex1}.

\textbf{MMD: Mean Shift with varying $d$} $(n=1000)$. The samples are drawn from $P= \mathcal{N}(\bm{0}, \bm{I})$ and $Q= \mathcal{N}(\bm{\mu}, \bm{I})$ where $\mu = [ \bm{0.5}_{10},\  \bm{0}_{d-10}]^\top \in \mathbb{R}^{d}$. The alternative holds only for the first ten rows. The results are shown in Figure \ref{fig:ex2}.

\textbf{HSIC: Logistic problem with varying $n$} $(d=50)$. We consider the feature selection toy experiment studied in \citet{jordon2018knockoffgan, candes2016panning}. We have $\bm{x} = [x_1,...,x_n]$ 
is 
$n$ i.i.d.\ draws from 
$50$-dimensional $\mathcal{N}(\bm{0},\bm{I})$ and $\bm{y} = [y_1,...,y_n]$ with
$y_j \sim \mathrm{Bernoulli}(\mathrm{Logistic}(\sum_{i=1}^{10} x^{(i)}_j))$ where $\mathrm{Logistic}(x) = \frac{\exp (x)}{1+\exp (x) }$. Notice that $\bm{y}$ is dependent only on the first $10$
dimensions of $\bm{x}$ and thus it is desirable to only reject the null hypothesis for these first $10$ features. For the block estimator, we set the block size to $5$; and for the incomplete estimator, we set $r=1$. The results are shown in Figure \ref{fig:ex3}.
\begin{figure}
\centering
\includegraphics[width=80mm]{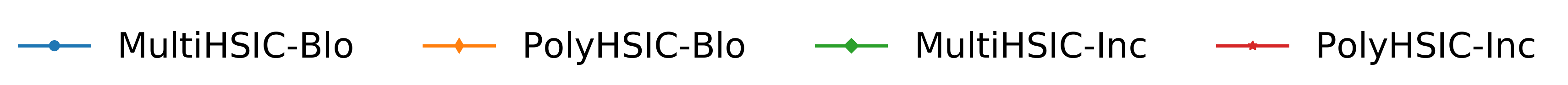}
\subfloat{\includegraphics[width=40mm]{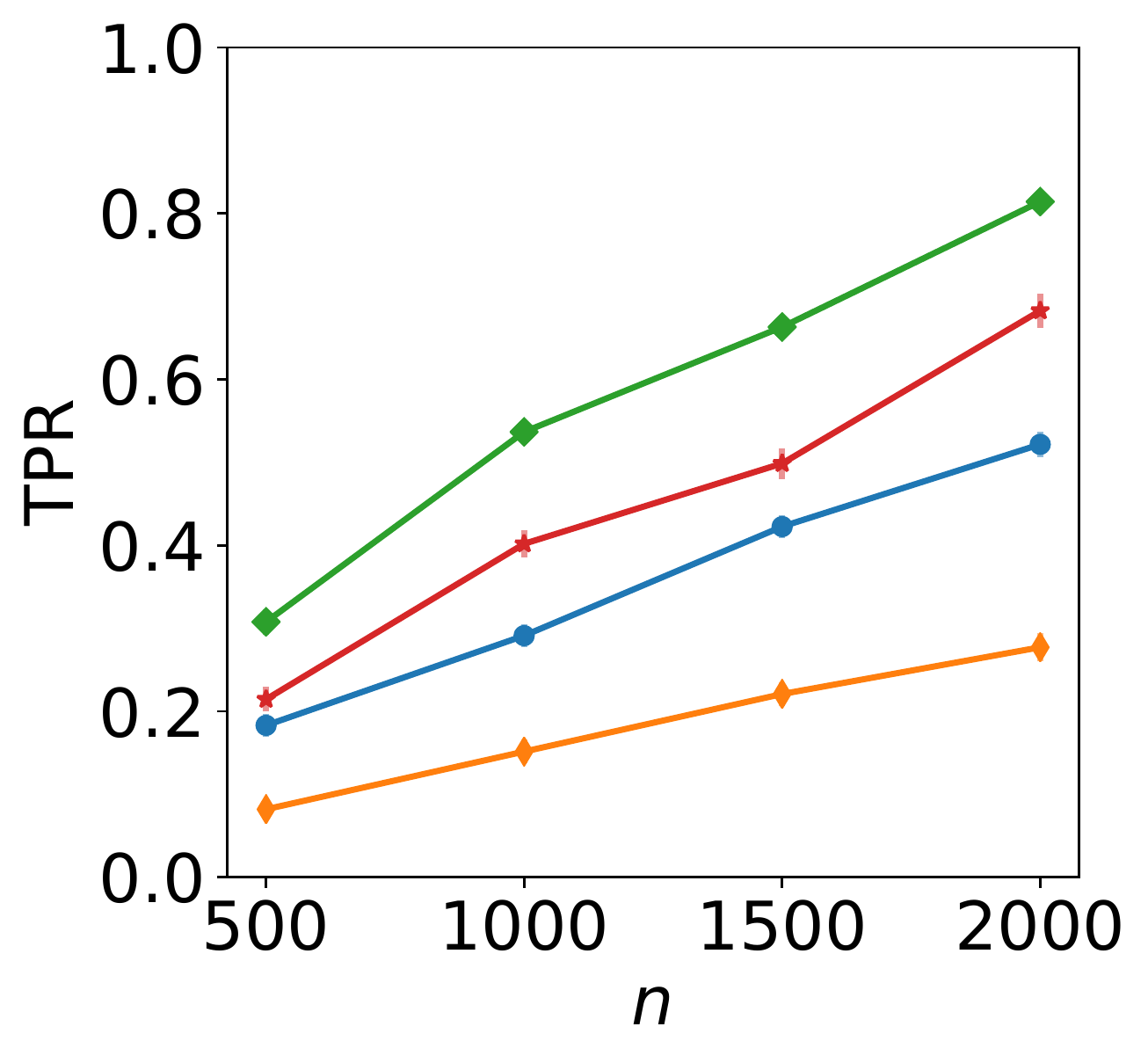}}
\subfloat{\includegraphics[width=40mm]{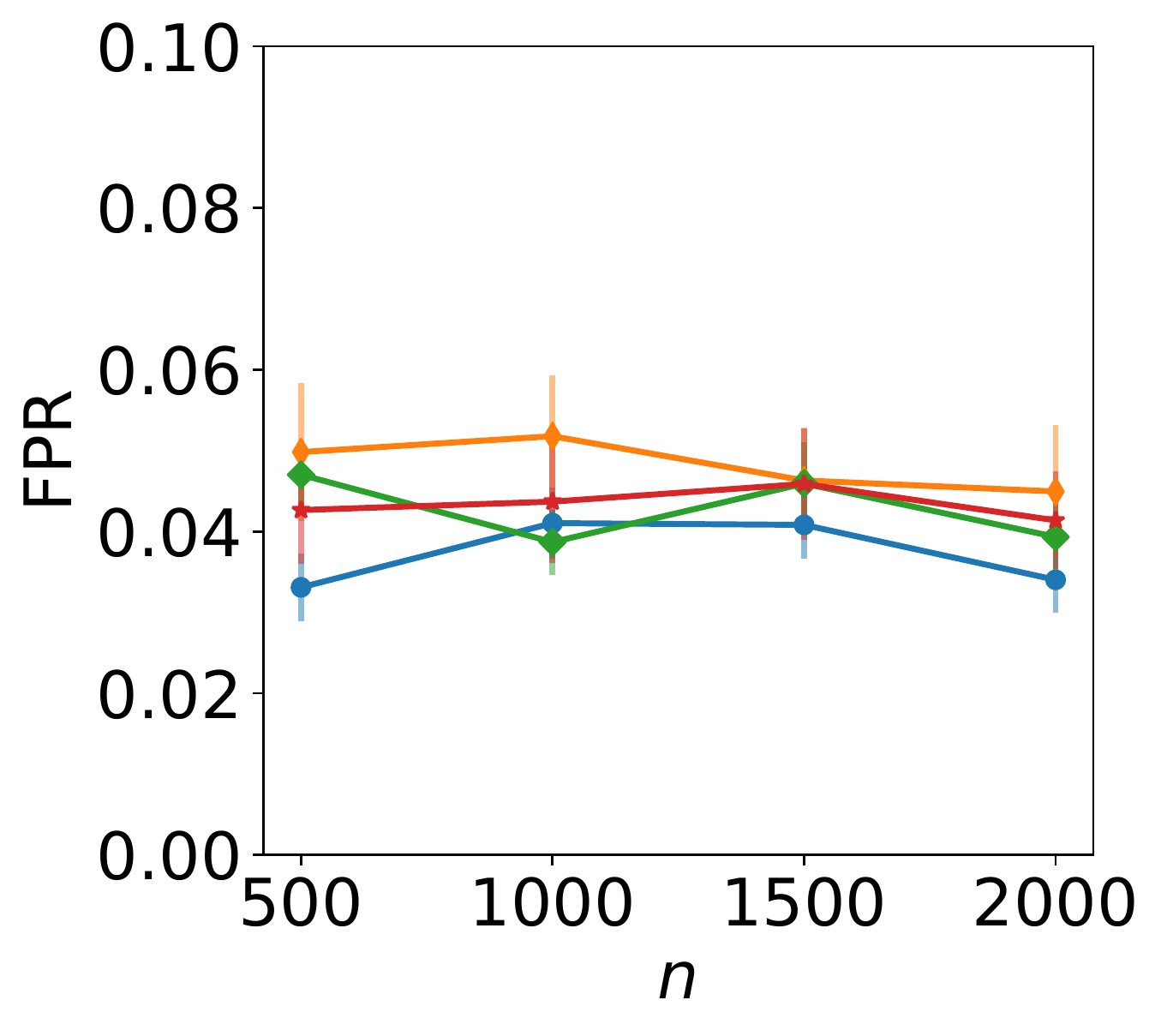}}
\caption{Logistic experiment as $n$ increases. Results are shown for $\mathrm{\widehat{HSIC}}_{Blo}$ and $\mathrm{\widehat{HSIC}}_{Inc}$.}
\label{fig:ex3}
\end{figure}
\subsection{Benchmarks}
\label{sec:mmd_benchmark}
We apply MultiMMD and PolyMMD for selecting features that significantly distinguishes 
two samples. Since TPR and FPR requires the knowledge of true features which is unknown, we regard the original
$d'$ number of pre-processed features in the dataset as ``true'' features and then we augment the dataset with $30$ fake features. This problem
was studied by \citet{yamada2018post}. We apply our proposal to three datasets.

\textbf{Pulsar dataset} ($n=100,\ d'=8$) of \cite{lyon2016fifty} contain samples of pulsar candidates collected during the High Time Resolution Universe Survey. We split the dataset into two sets where one is for pulsars and the other for not pulsars. 

\textbf{Heart dataset} $(n=138,\ d'=13)$ of \cite{janosiuci} contains samples of patients, their attributes (such as age and sex) and whether they suffer from
heart disease. We split the dataset by whether they have heart disease or not.

\textbf{Wine dataset} $(n=100,\ d'=12)$ of \cite{cortez2009modeling} contains samples
related to red and white variants of the Portuguese ``Vinho Verde'' wine. The dataset is split into red and white wines.

\begin{table}
\centering
\begin{tabular}{@{}lllll@{}}
\toprule
        & \multicolumn{2}{c}{PolyMMD} & \multicolumn{2}{c}{MultiMMD} \\ \midrule
Dataset & TPR          & FPR          & TPR          & FPR           \\ \midrule
Pulsar  & 0.746        & 0.063        & 0.993         & 0.056         \\
Heart   & 0.359        & 0.042        & 0.588        & 0.049              \\
Wine    & 0.567        & 0.054        & 0.749        & 0.057           \\ \bottomrule
\end{tabular}
\caption{Benchmarking experiment using $\mathrm{MMD}_{Inc}$. The results are averaged over $100$ trials ($\alpha=0.05$).}
\label{table:feat}
\end{table}
The results are shown in Table \ref{table:feat}. It can be seen that TPR of our proposed method is higher than
PolySel for all datasets while both methods corroborate with the theory that the FPR is controlled at $\alpha$ in
all scenarios.
\subsection{Anomalous Dataset Detection}
In this experiment, we are given $6$ datasets with one desired reference set and our goal is to eliminate the datasets that deviate too far from the reference. To be specific, our datasets are formed from the smiling subset of the CelebA dataset \citep{liu2015faceattributes}, it may also contain synthetic samples generated from the smiling GAN of \cite{jitkrittum2018informative}. Instead of testing on raw pixels, the datasets are
pre-processed and represented by $2048$-dimensional features extracted from the
Pool3 layer of Inception-v3 \citep{szegedy2016rethinking}. Each dataset contains
$1000$ samples with $x\%$ being fake images and $1-x\%$ real images. In this
case, since all models are wrong \citep{box1976science}, 
the higher percentage of the presence of synthetic samples, the higher the
chance of rejection. We apply \textrm{MultiSel} and \textrm{PolySel} with the
IMQ kernel \citep{gorham2017measuring}.
\begin{table}[ht]
\centering
\begin{tabular}{@{}llcllll@{}}
\toprule
\multicolumn{1}{c}{Dataset}  & \multicolumn{1}{c}{1} & 2      & \multicolumn{1}{c}{3} & \multicolumn{1}{c}{4} & \multicolumn{1}{c}{5} & \multicolumn{1}{c}{6} \\ \midrule
\multicolumn{1}{c}{$\%$ Fakes} & $0\%$                 & $10\%$ & $20\%$                & $30\%$                & $40\%$                & $50\%$                \\ \midrule
MultiMMD                     & 0.03                  & 0.02   & 0.07                  & 0.06                  & 0.27                  & 0.49                  \\
PolyMMD                      & 0.02                  & 0.02   & 0.05                  & 0.04                  & 0.28                  & 0.45                  \\ \bottomrule
\end{tabular}
\caption{Rejection rate of five datasets for both MultiSel and PolySel. Each dataset has its own percentage of fake features.
These results were averaged over $100$ trials and we set $k=4$, $\alpha =0.05$ and $n=2000$.}
\label{table:anomaly}
\end{table}

The results are shown in Table \ref{table:anomaly}. The rejection rates of both methods are similar. Dataset 1 has the same distribution as our reference model and so the rejection rate of less than $\alpha$. As for the other datasets, the rejection rate increases as the percentage of fake increases but the similarity in the performance is expected and can be explained by the small difference in the selection event for PolyMMD and MultiMMD.
%

\newpage
\subsubsection*{Acknowledgements}
 M.Y. was supported by the
JST PRESTO program JPMJPR165A and partly supported by MEXT KAKENHI 16H06299 and  the RIKEN engineering network funding. S.M. was supported by MEXT
KAKENHI 16H06299.
\bibliography{ref}

\clearpage

\onecolumn
\appendix

\begin{center}
{\LARGE{}{}{}{}{}{}\ourtitle{}} 
\par\end{center}

\begin{center}
\textcolor{black}{\Large{}{}{}{}{}{}Supplementary}{\Large{}{}{}{}{}
} 
\par\end{center}

\section{TRUE POSITIVE RATE (TPR) AND FALSE POSITIVE RATE (FPR)}
\label{sec:tpr_fpr}
Let $\mathcal{I}_-$ be the indices of features such that the null holds, i.e., for MMD, we have $\mathcal{I}_-:=\{i : \mathrm{MMD}(P^{(i)},Q^{(i)}) = 0\}$ (and
for HSIC, we have $\mathcal{I}_-:=\{i : \mathrm{HSIC}(P^{(i)},Q) = 0\}$). Similarly, let $\mathcal{I}_+$ be the indices of features such that the alternative holds, i.e., for MMD, we have $\mathcal{I}_+:=\{i : \mathrm{MMD}(P^{(i)},Q^{(i)}) > 0\}$ (and
for HSIC, we have $\mathcal{I}_+:=\{i : \mathrm{HSIC}(P^{(i)},Q) > 0\}$). Then, for a set of selected features $\mathcal{S}_k$ we define FPR and TPR as follows,
\begin{align*}
    &\mathrm{FPR} = \mathbb{E}\bigg [\frac{|\mathcal{S}_k \cap \mathcal{I}_- \cap \mathcal{R}|}{|\mathcal{S}_k \cap \mathcal{I}_-|}\bigg ],
    &\mathrm{TPR} = \mathbb{E}\bigg [\frac{|\mathcal{S}_k \cap \mathcal{I}_+ \cap \mathcal{R}|}{|\mathcal{S}_k \cap \mathcal{I}_+|}\bigg ],
\end{align*}
where $\mathcal{R}$ is the set of indices that the algorithm rejections and note that $\mathcal{R} \subseteq \mathcal{S}_k$.

\section{EMPIRICAL DISTRIBUTIONS OF $\widehat{\mathrm{MMD}}_{\text{Inc}}(X,Y)$
and $\widehat{\mathrm{HSIC}}_{\text{Inc}}(Z)$}
\label{sec:append_emp_dist}
In this section, we simulate the empirical distribution of the incomplete estimator for both
$\widehat{\mathrm{MMD}}_{\text{Inc}}(X,Y)$ and $\widehat{\mathrm{HSIC}}_{\text{Inc}}(Z)$.
\subsection{Empirical distribution of $\widehat{\mathrm{MMD}}_{\text{Inc}}(X,Y)$}
\textbf{Case $P=Q$:} For
MMD, we let $X \sim \mathcal{N}(0,1)$ and $Y \sim \mathcal{N}(0,1)$ which means 
$\widehat{\mathrm{MMD}}_{\text{u}}(X,Y)$ is degenerate whereas we show that 
$\widehat{\mathrm{MMD}}_{\text{Inc}}(X,Y)$ follows a normal distribution (see Figure \ref{fig:emp_mmdinc}).
When the $r$ is small, the empirical distribution of the incomplete estimators follows a normal distribution butas
$r$ gets bigger we expect it to behave like its complete estimator counterpart.
\begin{figure}[ht]
    \centering
    \subfloat[$r=1$]{
        \includegraphics[width=0.3\linewidth]{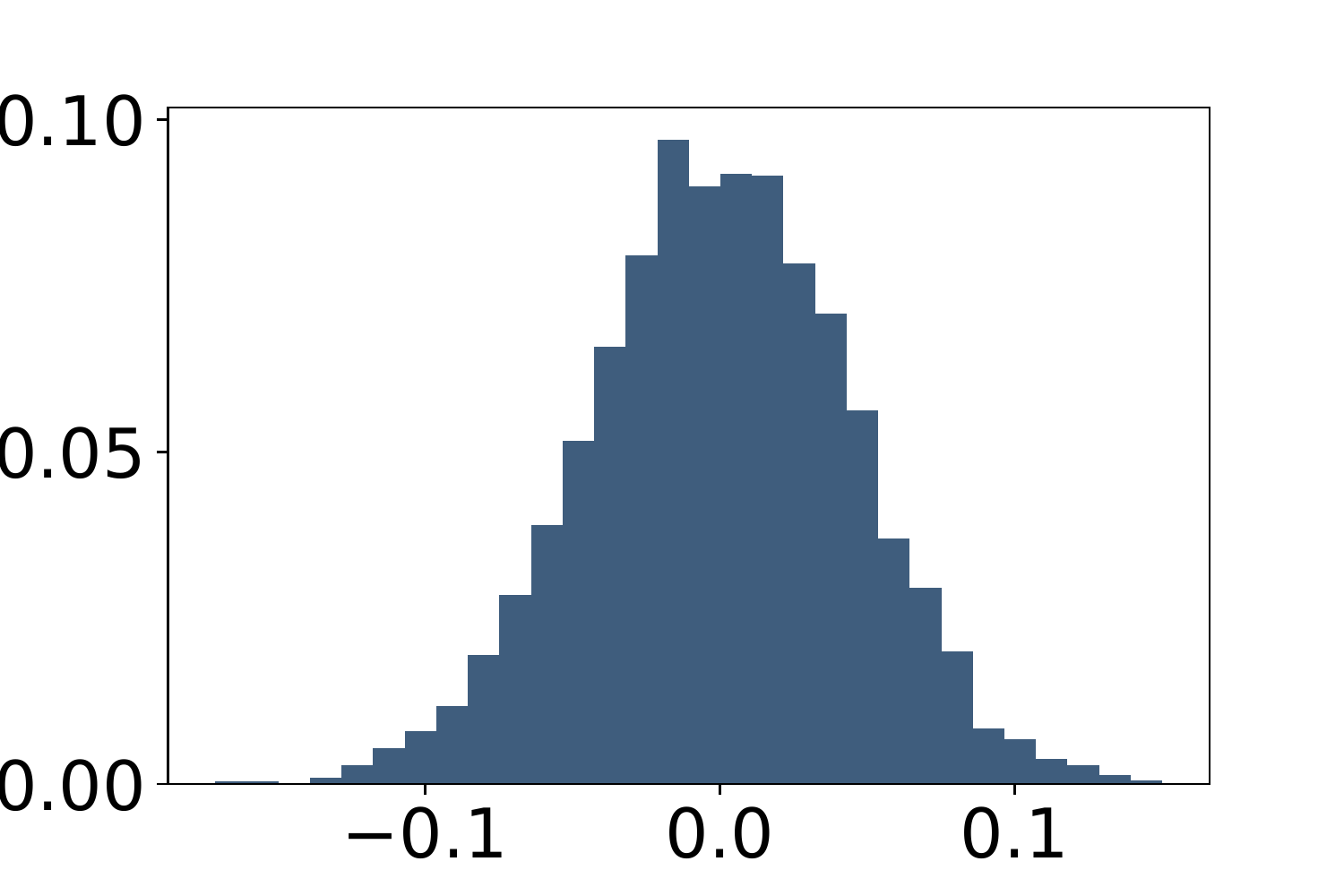}
    }
    \subfloat[$r=10$]{
        \includegraphics[width=0.3\linewidth]{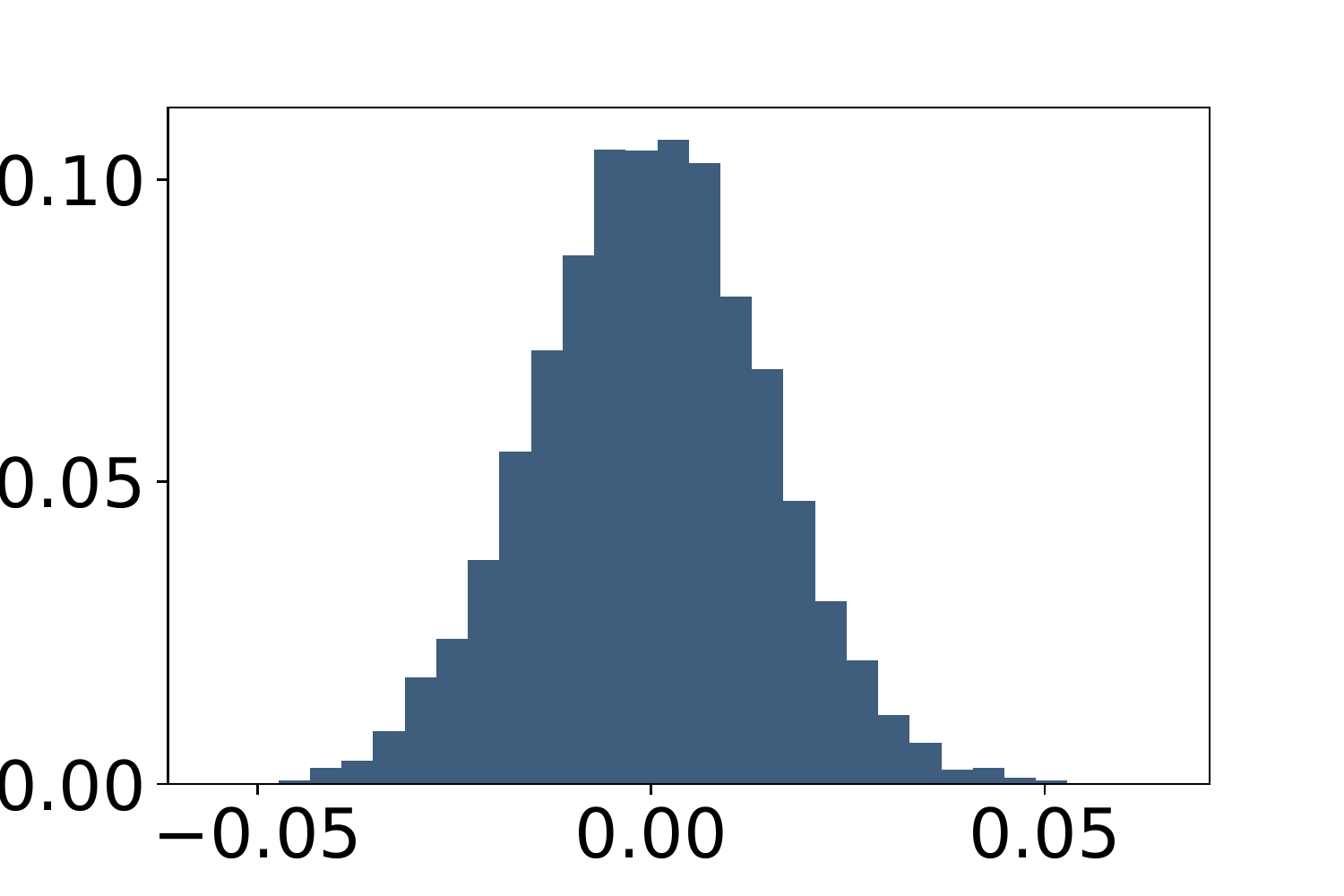}
    }
    \subfloat[$r=100$]{
        \includegraphics[width=0.3\linewidth]{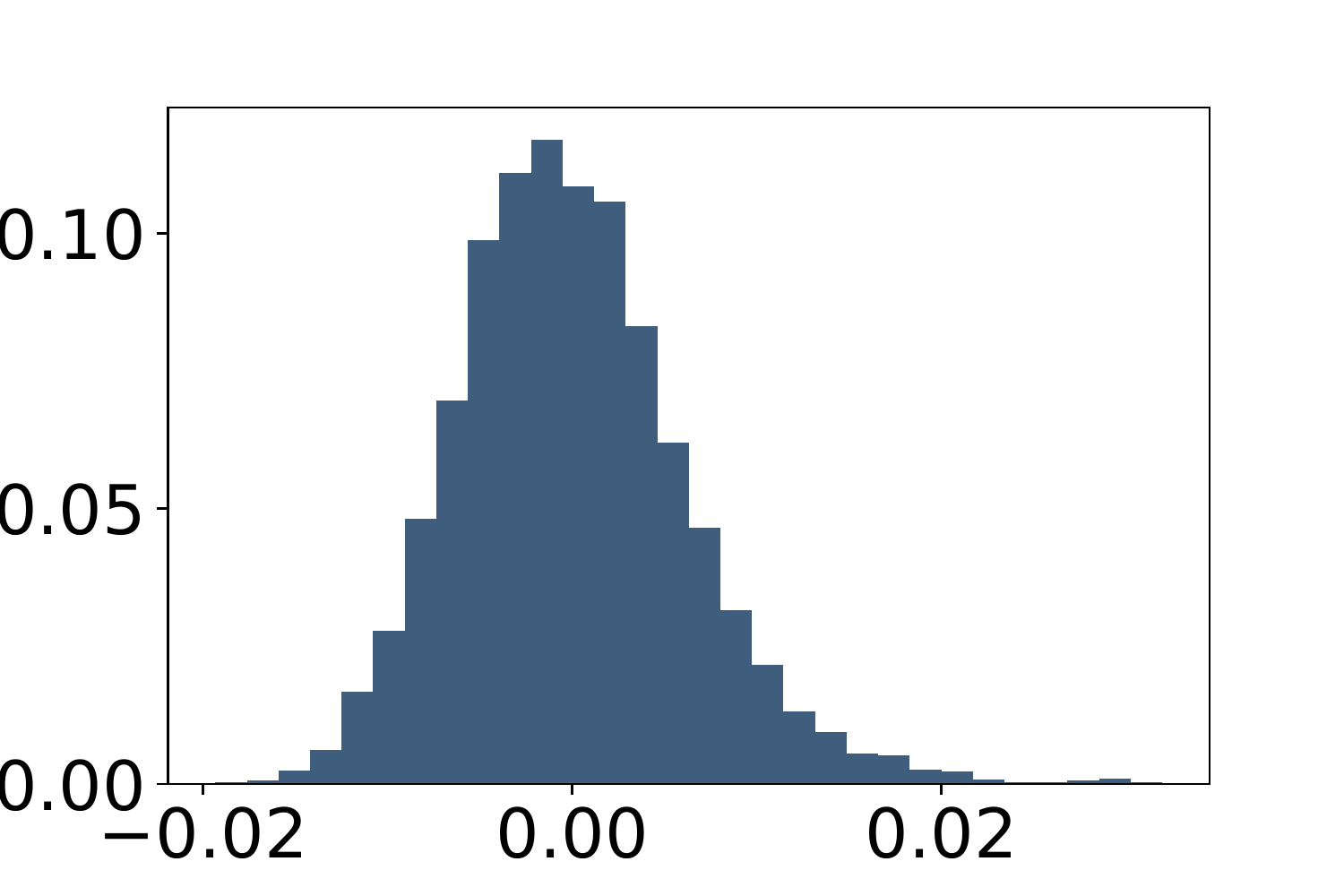}
    }
    \caption{The empirical distribution $\widehat{\mathrm{MMD}}_{\text{Inc}}(X,Y)$ for $r \in \{1, 10, 100\}$. $5000$ samples were used.}
    \label{fig:emp_mmdinc}
\end{figure}

\textbf{Case $P\neq Q$:} We show the empirical distribution of the incomplete estimator for MMD  when 
$P=\mathcal{N}(0,1)$ and $Q=\mathcal{N}(\mu,1)$ and $\mu \in \{ 0, 2, 3 \}$. Under the alternative, for our choice in $r$,
the distribution under the alternative is expected to have higher variance than the null distribution.

\begin{figure}[!ht]
    \centering
    \subfloat[$r=1$]{
        \includegraphics[width=0.3\linewidth]{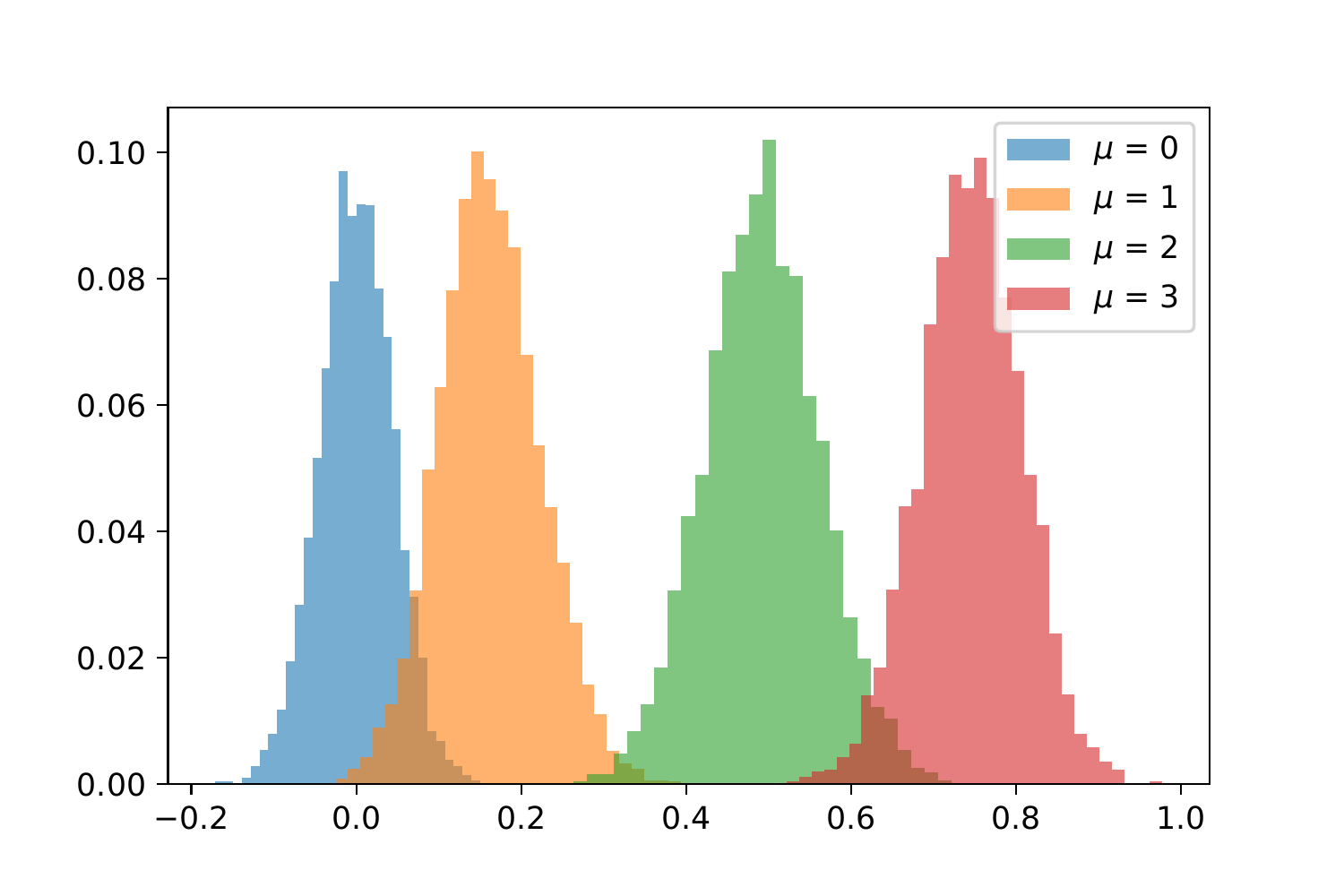}
    }
    \subfloat[$r=10$]{
        \includegraphics[width=0.3\linewidth]{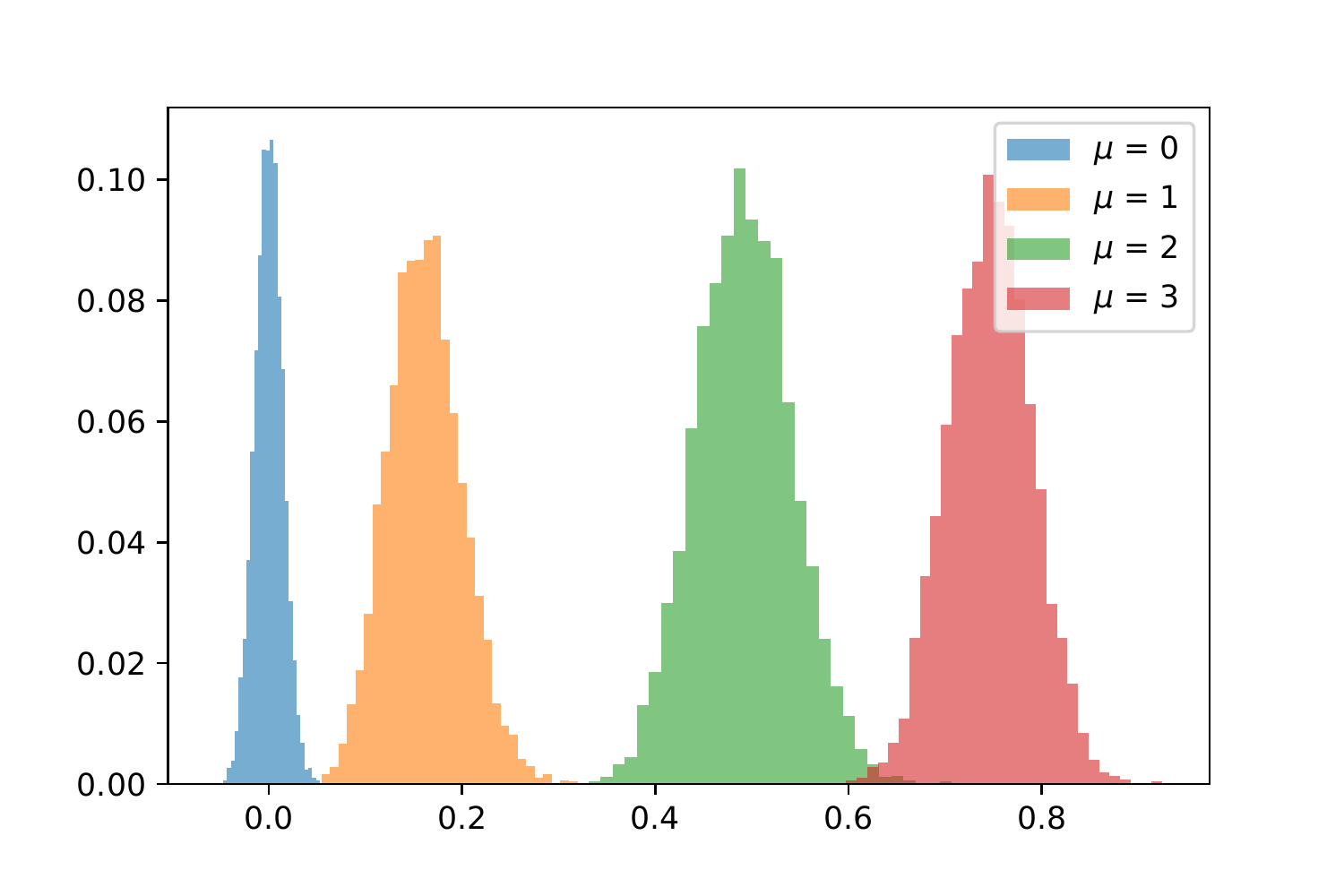}
    }
    \subfloat[$r=100$]{
        \includegraphics[width=0.3\linewidth]{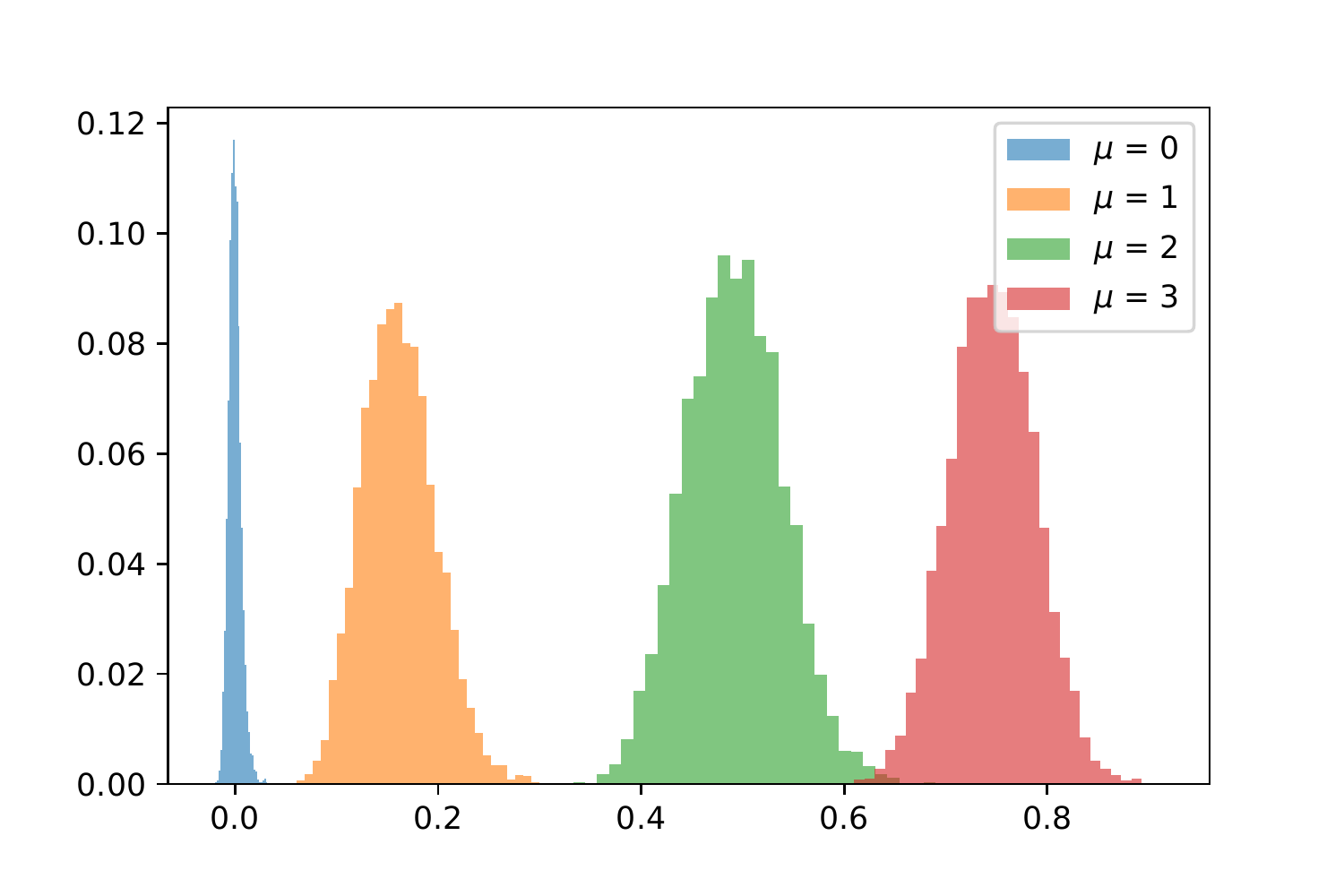}
    }
    \caption{The empirical distribution $\widehat{\mathrm{MMD}}_{\text{Inc}}(Z)$ for $r \in \{1, 10, 100\}$. $5000$ samples were used.}
\end{figure}

\subsection{Empirical distribution of $\widehat{\mathrm{HSIC}}_{\text{Inc}}(Z)$}

For HSIC,
let $Z:=(X,Y)$ where $X$ and $Y$ is follows a standard normal and is sampled independently of each other.
We show that in this case $\widehat{\mathrm{HSIC}}_{\text{Inc}}(Z)$ is also normal (see Figure \ref{fig:emp_hsicinc}).

\begin{figure}[!h]
    \centering
    \subfloat[$r=1$]{
        \includegraphics[width=0.3\linewidth]{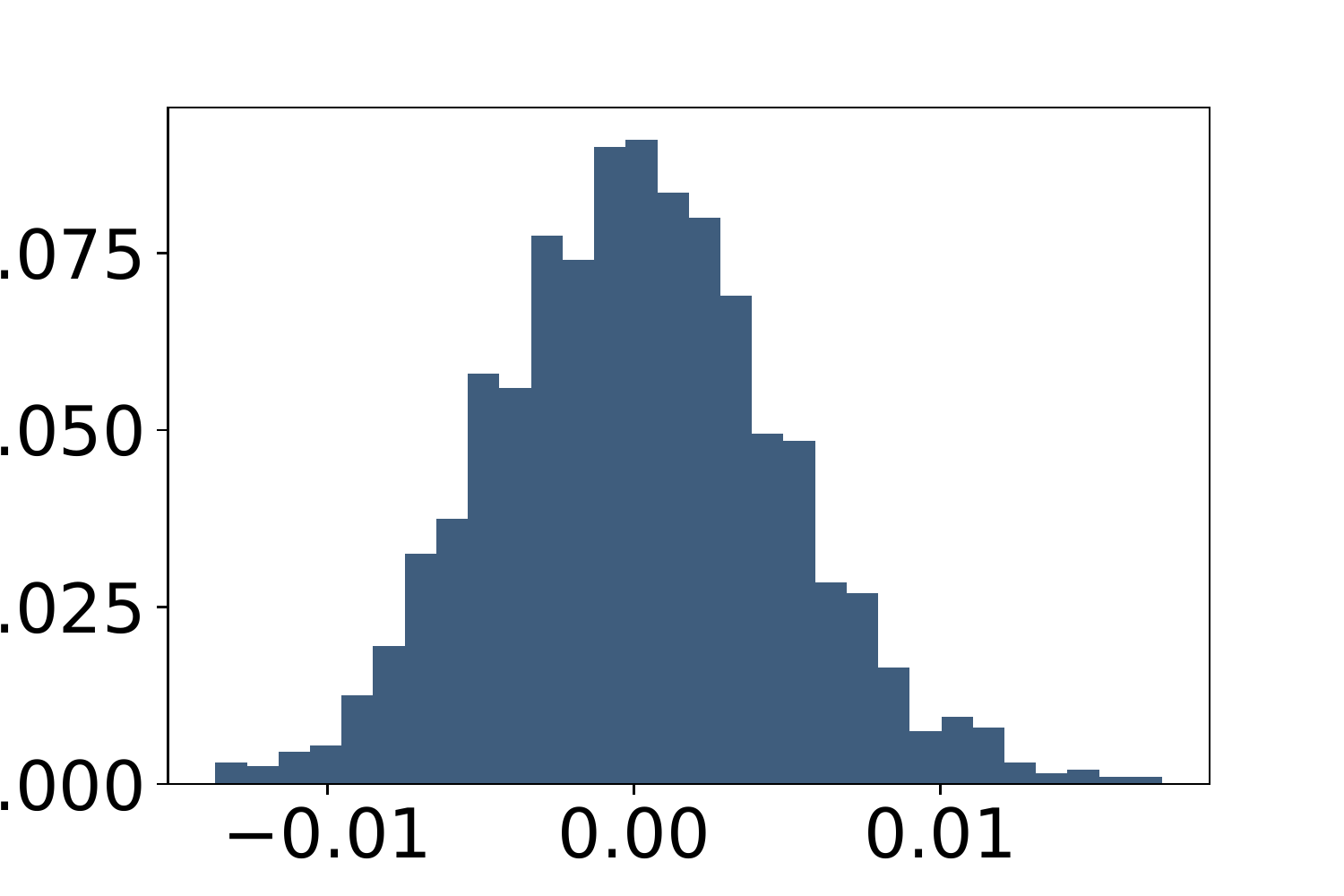}
    }
    \subfloat[$r=10$]{
        \includegraphics[width=0.3\linewidth]{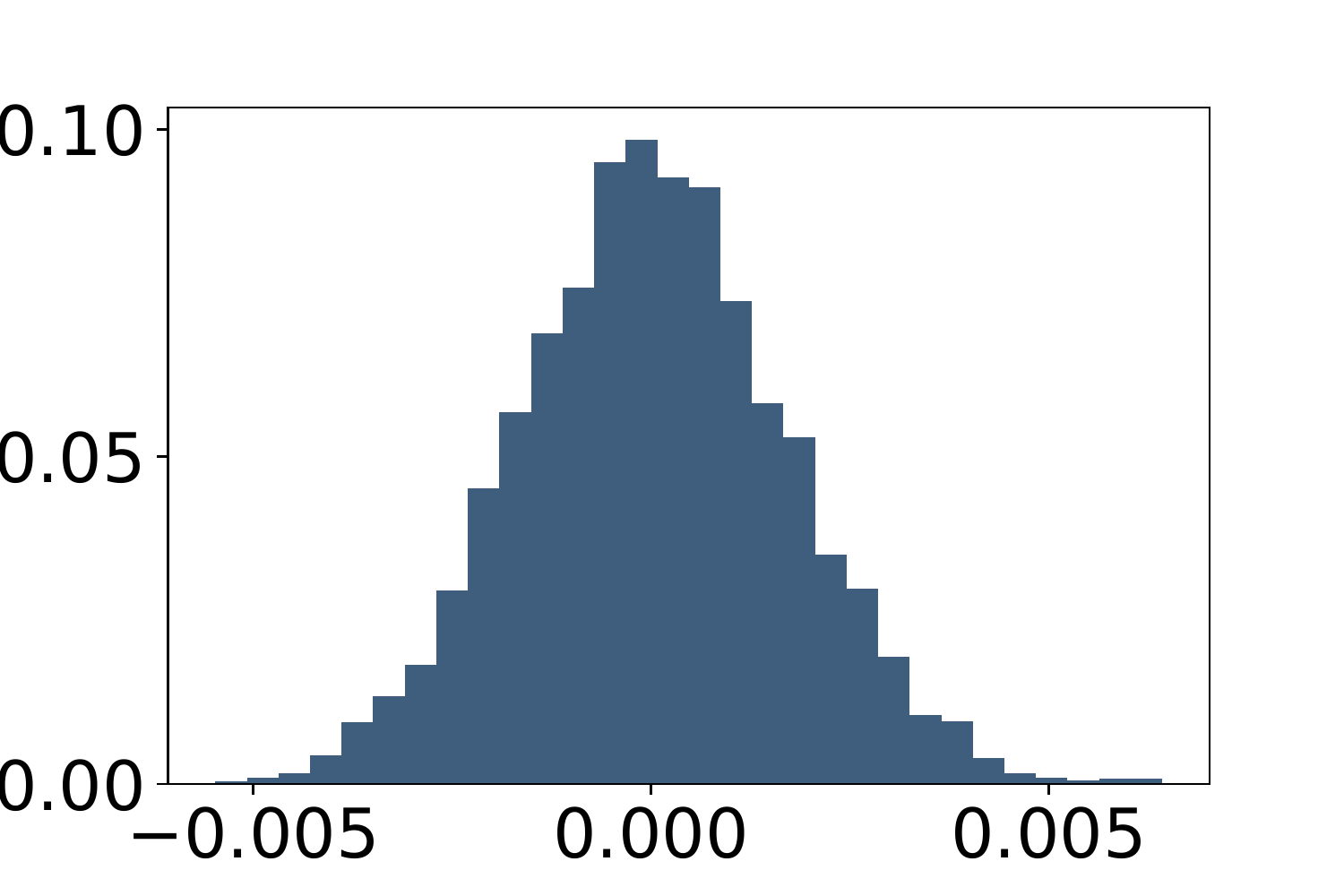}
    }
    \subfloat[$r=100$]{
        \includegraphics[width=0.3\linewidth]{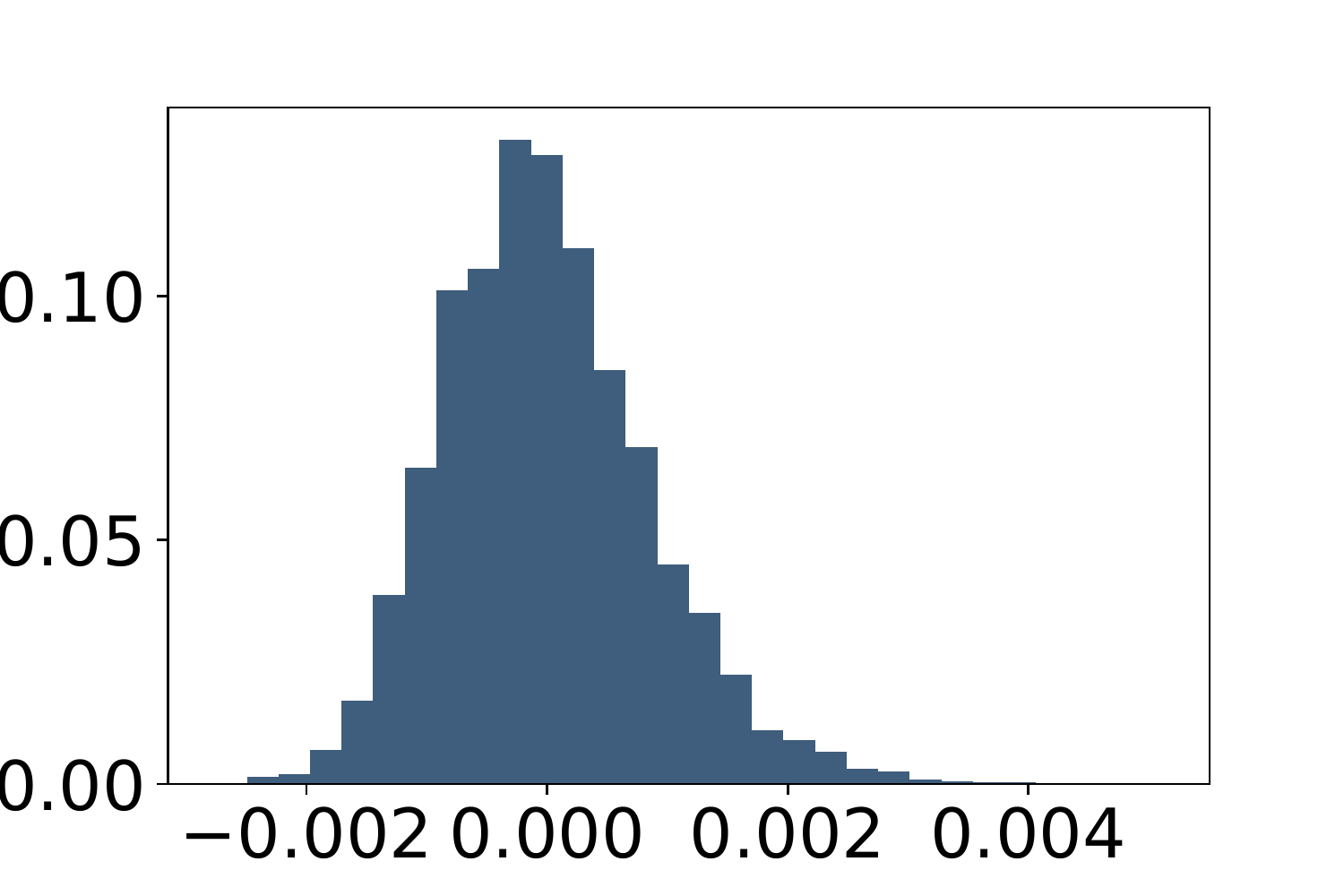}
    }
    \caption{The empirical distribution $\widehat{\mathrm{HSIC}}_{\text{Inc}}(X,Y)$ for $r \in \{1, 10, 100\}$. $5000$ samples were used.}
    \label{fig:emp_hsicinc}
\end{figure}

\section{MULTISCALE BOOSTRAP ALGORITHM FOR HSIC}
\label{sec:appen_multihsic}
In this section, we present algorithms for MultiHSIC for incomplete HSIC (Section \ref{sec:appen_multiinchsic}) and for block HSIC (Section \ref{sec:appen_multiblohsic}). Algorithm \ref{alg:multihsic} describes the procedure for calculating $p$-values using multiscale bootstrap.
\subsection{Incomplete HSIC}
\label{sec:appen_multiinchsic}
The parameters $\bm{T}(\bm{Z})$ and $\bm\Sigma$ for the incomplete estimator are estimated with the same method as for the incomplete MMD (see Section \ref{sec:si_mmd}). The algorithm is described in Algorithm \ref{alg:multihsic}.
\begin{algorithm}[ht]
    \caption{MultiHSIC($\bm{Z}_n, k,\, \mathcal{M}$): Selective $p$-values for the null hypothesis $H_{0,i}: \mathrm{HSIC}(P_{\bm{x}^{(i)}\bm{y}}) = 0\ |\ i \in \mathcal{S}_k \text{ is selected}$.}
    \label{alg:multihsic}
    \begin{algorithmic}[1] 
            \State $\hat{\bm{T}}(\bm{Z}), \hat{\bm\Sigma} \gets $EstimateParam($\bm{Z}_n$)
            \State $\mathcal{S}_k \gets $ the indexes
        of $k$ largest values of $\{\widehat{\mathrm{HSIC}}( \bm{Z}_n^{(i)})\}_{i\in\mathcal{I}}$
            \For{$i \in \mathcal{S}_k$}
                \For{$n' \in \mathcal{M}$}
                    \State $\gamma^2_{n'} \gets \frac{n}{n'}$
                    \State Sample $\{\bm{y}^*_i\}^B_{i=1} \overset{i.i.d.}{\sim} \mathcal{N}(\hat{\bm{T}}(\bm{Z}),\, \gamma^2_{n'}\hat{\bm\Sigma})$
                    \State $\bs_{\gamma_{n'}^2}(S) \gets \sum^B_{i=1}\mathds{1}^{(i)}_S(\bm{y}^*_i)/B$
                \EndFor
                \State Fit a linear model $\varphi_S(\gamma^2)$ such that $\varphi_S(\gamma^2)=\gamma\bar{\Phi}^{-1}(\bs_{\gamma^2}(S))$.
                \State $\hat{\beta}_0^{(i)} \gets \hat{\sigma}_i^{-1}\sqrt{l_n}\widehat{\mathrm{HSIC}}_{Inc}(\bm{Z}_n^{(i)})$
                \State $p_i \gets \Bar{\Phi}(\hat{\beta}_0^{(i)})/\Bar{\Phi}(\hat{\beta}_0^{(i)}+\varphi_S(0))$
            \EndFor
            \State \Return $\{p_i\}_{i=0}^k$ and $\mathcal{S}_k$
    \end{algorithmic}
\end{algorithm}

The following theorem justifies our use of the multivariate normal model,

\begin{restatable}[]{theorem}{asymptinchsictest}
Assume that $\lim_{n,l\rightarrow \infty} n^{-1}l=\lambda$ and assume that $\lim_{n,l\rightarrow \infty} n^{-2}l = 0$ and $0<\lambda<\infty$ then,
$    l^{\frac{1}{2}}\bigg ( \begin{bmatrix} \widehat{\mathrm{HSIC}}_{Inc}(\bm{Z}^{(1)}) \\ 
                                            \vdots \\ 
                                            \widehat{\mathrm{HSIC}}_{Inc}(\bm{Z}^{(d)})
    \end{bmatrix} -
    \begin{bmatrix} \mathrm{HSIC}(P_{\bm{x}^{(1)}\bm{y}}) \\ 
                                            \vdots \\ 
                                            \mathrm{HSIC}(P_{\bm{x}^{(d)}\bm{y}})
    \end{bmatrix}
    \bigg )$
is asymptotically normal.
\end{restatable}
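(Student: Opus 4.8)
The plan is to reduce the multivariate statement to the one-dimensional incomplete $U$-statistic central limit theorem already used for Corollary~\ref{thm:asympt_inchsic2}, via the Cramér--Wold device. Write $\widehat{\bm H}_{Inc}:=[\widehat{\mathrm{HSIC}}_{Inc}(\bm Z^{(1)}),\dots,\widehat{\mathrm{HSIC}}_{Inc}(\bm Z^{(d)})]^\top$ and $\bm\theta:=[\mathrm{HSIC}(P_{\bm x^{(1)}\bm y}),\dots,\mathrm{HSIC}(P_{\bm x^{(d)}\bm y})]^\top$, and let $h^{(m)}$ denote the HSIC $U$-statistic kernel computed on coordinate $m$. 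The crucial structural point is that all $d$ coordinates of $\widehat{\bm H}_{Inc}$ share the \emph{same} random design $\mathcal D$ (the $l$ quadruples drawn with replacement from $\bm i^n_4$). Hence for any fixed $\bm a\in\mathbb R^d$,
\begin{equation*}
\bm a^\top\widehat{\bm H}_{Inc}\;=\;\frac{1}{l}\sum_{(i,j,q,r)\in\mathcal D}\tilde h(i,j,q,r),\qquad \tilde h:=\textstyle\sum_{m=1}^d a_m h^{(m)},
\end{equation*}
is again an incomplete $U$-statistic of order $4$, with symmetric kernel $\tilde h$ and population mean $\mathbb E[\tilde h]=\bm a^\top\bm\theta$. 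Since the Gaussian/IMQ kernels are bounded, each $h^{(m)}$ and therefore $\tilde h$ is bounded, so every moment condition required below holds trivially.

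I would then apply the scalar incomplete $U$-statistic CLT to $\tilde h$ — this is exactly the tool behind Corollary~\ref{thm:asympt_inchsic2} (see \citet{janson1984asymptotic}). Under the assumptions $n^{-2}l\to0$ and $n^{-1}l\to\lambda\in(0,\infty)$, decompose
\begin{equation*}
\sqrt l\big(\bm a^\top\widehat{\bm H}_{Inc}-\bm a^\top\bm\theta\big)=\frac{1}{\sqrt l}\sum_{t=1}^{l}\big(\tilde h(\mathrm{tuple}_t)-\widehat U_n^{\tilde h}\big)\;+\;\sqrt{l/n}\cdot\sqrt n\big(\widehat U_n^{\tilde h}-\bm a^\top\bm\theta\big),
\end{equation*}
where $\widehat U_n^{\tilde h}$ is the complete $U$-statistic with kernel $\tilde h$, which is exactly the conditional mean of $\tilde h(\mathrm{tuple}_t)$ given the data. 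The design is independent of the data, so the two terms are asymptotically independent (characteristic-function argument: condition on the data; the conditional CLT for the $l$ i.i.d.\ bounded design draws forces the first term's conditional characteristic function to a constant Gaussian one, while the second term is a function of the data alone). The first term converges to $\mathcal N(0,\bm a^\top\bm\Sigma\bm a)$, with $\bm\Sigma$ the limit of the sample covariance of $\bm h$ over a uniformly random quadruple; the second converges to $\mathcal N(0,\lambda\,\bm a^\top\bm\Sigma_u\bm a)$, with $\bm\Sigma_u$ the covariance of the first H\'ajek projections of the $h^{(m)}$ (cf.\ \citet[Theorem~5]{song2012feature}), which vanishes along any direction $\bm a$ for which the corresponding component is first-order degenerate (i.e.\ $\bm x^{(m)}$ independent of $\bm y$), recovering the first bullet of Corollary~\ref{thm:asympt_inchsic2}. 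Summing, $\sqrt l(\bm a^\top\widehat{\bm H}_{Inc}-\bm a^\top\bm\theta)\overset{d}{\rightarrow}\mathcal N(0,\bm a^\top(\lambda\bm\Sigma_u+\bm\Sigma)\bm a)$.

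Since this holds for every $\bm a$, the Cramér--Wold theorem gives $\sqrt l(\widehat{\bm H}_{Inc}-\bm\theta)\overset{d}{\rightarrow}\mathcal N(0,\lambda\bm\Sigma_u+\bm\Sigma)$, which is the asserted asymptotic normality; as a by-product it identifies the limiting covariance, so that the consistent plug-in $\widehat{\bm\Sigma}$ justifies the parametric normal model in Algorithm~\ref{alg:multihsic}. I expect the only genuinely delicate step to be the bookkeeping of the shared design: one must observe that a linear functional of the per-coordinate incomplete estimators is \emph{itself one} incomplete $U$-statistic — true precisely because the same $\mathcal D$ is reused across coordinates — which is what lets the one-dimensional theory transfer without re-proving a multivariate incomplete $U$-statistic CLT from scratch. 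A minor secondary point is that null coordinates merely make $\bm\Sigma_u$ singular in those directions, which does not affect the Cramér--Wold conclusion, since a degenerate Gaussian is still Gaussian and $\bm a\mapsto\bm a^\top(\lambda\bm\Sigma_u+\bm\Sigma)\bm a$ is continuous.
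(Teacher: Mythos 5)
Your proposal is correct and follows the same high-level route as the paper: a Cramér--Wold reduction to a scalar statement. The difference is in how the scalar statement is justified, and here your version is actually the more careful one. The paper's proof (which defers to the argument of Theorem 5 of Yamada et al., 2018) simply asserts that each coordinate $\widehat{\mathrm{HSIC}}_{Inc}(\bm{Z}^{(i)})$ is asymptotically normal and then appeals to the continuous mapping theorem to conclude normality of $\bm\eta^\top\widehat{\bm H}_{Inc}$; as literally written, that step is a gap, since marginal normality of the coordinates does not by itself yield a distributional limit for a linear combination. Your argument supplies exactly the missing ingredient: because the coordinates share one design $\mathcal{D}$, the linear combination $\bm a^\top\widehat{\bm H}_{Inc}$ is itself a single incomplete U-statistic with the symmetric bounded kernel $\tilde h=\sum_m a_m h^{(m)}$, so the one-dimensional incomplete U-statistic CLT of Janson (1984) (the same tool behind Corollary~\ref{thm:asympt_inchsic2}) applies directly, and your decomposition into design noise plus $\sqrt{l/n}$ times the complete U-statistic cleanly identifies the limit covariance $\lambda\bm\Sigma_u+\bm\Sigma$, including the degenerate directions under the null. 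Two small remarks: (i) the shared-design assumption is an implementation detail that the theorem statement does not make explicit, so you should state it (it does match the paper's estimator of $\hat{\bm\Sigma}$, which averages the vector-valued kernel over a common $\mathcal{D}$; with independent per-coordinate designs the conclusion would still hold but with a different design-noise covariance); (ii) the asymptotic-independence step between the conditional design CLT and the data term deserves the one-line conditional characteristic-function argument you sketch, which is standard. In short, same skeleton as the paper, but your write-up closes the step the paper leaves implicit and, as a bonus, gives the explicit limiting covariance used to justify the parametric normal resampling in the algorithm.
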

The proof can be found in Appendix \ref{proof:hsicstat}. 
\subsection{Block HSIC}
\label{sec:appen_multiblohsic}
\textbf{Block estimator as the incomplete estimator:} The block estimator $\mathrm{\widehat{HSIC}}_{Blo}$ 
\citep{zhang2018large} is an example of an incomplete estimator for HSIC with a fixed design matrix. To see this note that for a given blocksize $B$, we have a total of $\frac{n}{B}$ blocks. For each block, the complete U-statistic estimator is calculated, i.e., for block $t$
$$
\hat{\eta}(t) = \frac{(B-4)!}{B!}\sum_{(i,j, q,r) \in \mathbf{i}^{[(t-1)B+1,tB]}_4} h(i,j, q,r),
$$
where $\mathbf{i}^{[u,i]}_4$ is the set of 4-tuple with each index, between $u$ and $i$, appearing exactly once. There are a total of $\frac{n}{B}$ blocks that are averaged to produce $\mathrm{\widehat{HSIC}}_{Blo}$, i.e., we have
$$
\mathrm{\widehat{HSIC}}_{Blo} =  \frac{B}{n}\sum_{t=1}^{\frac{n}{B}}\hat{\eta}(t).
$$
 Thus, we have shown that $\mathrm{\widehat{HSIC}}_{Blo}$ can be rewritten as $\widehat{\mathrm{HSIC}}_{Inc}$ where we have  $\mathcal{D} = \cup_{t=1}^{\frac{n}{B}} \mathbf{i}^{[(t-1)B+1,tB]}_4$. Note that $|\mathcal{D}_{Blo}| = \frac{(B-1)!}{(B-4)!}n$.
\label{sec:appen_bias}

\textbf{Algorithm:}
The extension to multiscale bootstrap to include the block estimator is simple. It only requires changes in the parameters of the resampling distribution for varying $n'$, as a well as how the signed distance $\hat{\beta}^{(i)}_0$ for feature $i$ is calculated.

Let $\hat{\bm{T}}(\bm{Z}) := \sqrt{\frac{n}{B}}[\widehat{\mathrm{HSIC}}_{Blo}(\bm{Z}_n^{(1)}),\dots,\widehat{\mathrm{HSIC}}_{Blo}(\bm{Z}_n^{(d)})]^\top$ and $\bm{T}(\bm{Z})$ be its population counterpart, namely, $\bm{T}(\bm{Z}) = \sqrt{\frac{n}{B}}[{\mathrm{HSIC}}(P_{\bm{x}^{(1)}\bm{y}}),\dots,{\mathrm{HSIC}}(P_{\bm{x}^{(d)}\bm{y}})]^\top$. 
Note that $\hat{\bm{T}}(\bm{Z})$ can be equivalently written as $\sum_{i=1}^{n/B}\hat{\bm\eta}(i)$ where
$\hat{\bm\eta}(i)=[\hat{\eta}^{(1)}(i),\dots,\hat{\eta}^{(d)}(i)]^\top$, and $\hat{\eta}^{(j)}(i)$ is the complete U-statistic estimator for HSIC applied to the $i$-th block of $\bm{Z}^{(j)}$. Then in the limit
$n\rightarrow \infty$, $B \rightarrow \infty$, and $\frac{n}{B} \rightarrow \infty$ \citep{zhang2018large}, we have
under the null hypothesis
$$
\hat{\bm{T}}(\bm{Z})-\bm{T}(\bm{Z}) \sim \mathcal{N}(\bm{0},\,\bm\Sigma),
$$
where $\bm\Sigma$ is the covariance matrix with its elements as $\bm\Sigma_{ij} = \mathrm{Cov}\left(\bm\eta^{\left(i\right)},\bm\eta^{\left(j\right)}\right)$.
We estimate $\bm\Sigma$ with the sample covariance $\hat{\bm{\Sigma}}$, i.e., we have
$\hat{\bm\Sigma}:=\frac{B}{n}\sum_{i =1}^{n/B} [\hat{\bm\eta}(i) - \overline{\bm\eta}][\hat{\bm\eta}(i) - \overline{\bm\eta}]^\top$. Then for varying $n'$, instead of resampling $n'$ samples from $\bm{Z}$, we produce samples directly from $\mathcal{N}(\hat{\bm{T}}(\bm{Z}), \frac{n}{n'}\hat{\bm\Sigma})$ as before.
The sign distance $\hat{\beta}_0^{(i)}$ is $ \hat{\sigma}_i^{-1}\sqrt{\frac{n}{B}}\widehat{\mathrm{HSIC}}_{Blo}(\bm{Z}_n^{(i)})$ where $\hat{\sigma}_i^{-1}$ is the $i$-th diagonal element of $\hat{\bm\Sigma}$.

%
\begin{wrapfigure}{r}{80mm}
\centering
\includegraphics[width=80mm]{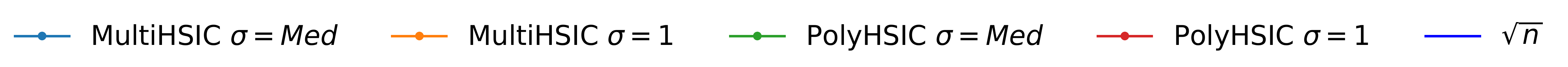}
\subfloat{\includegraphics[width=40mm]{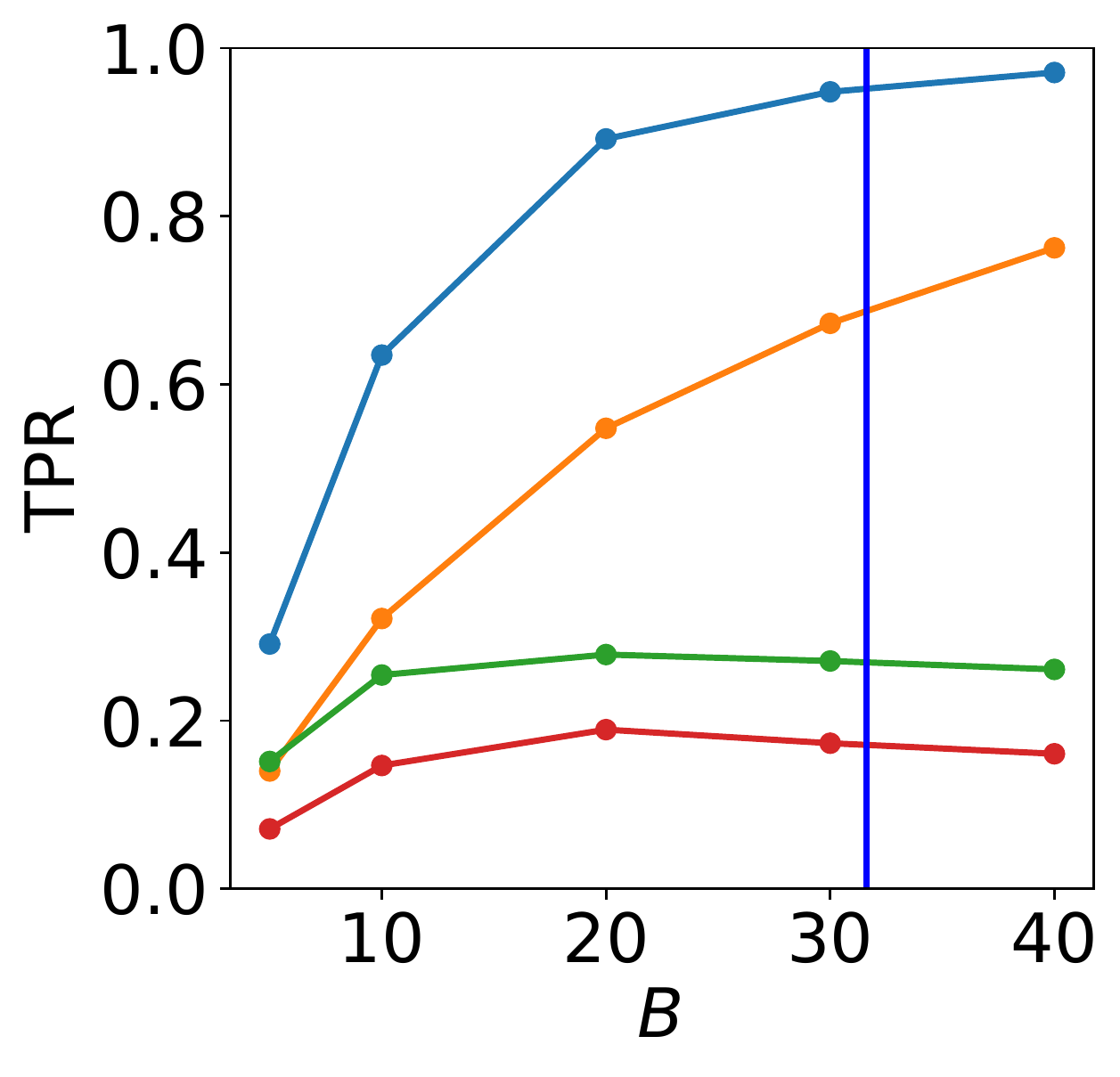}}
\subfloat{\includegraphics[width=40mm]{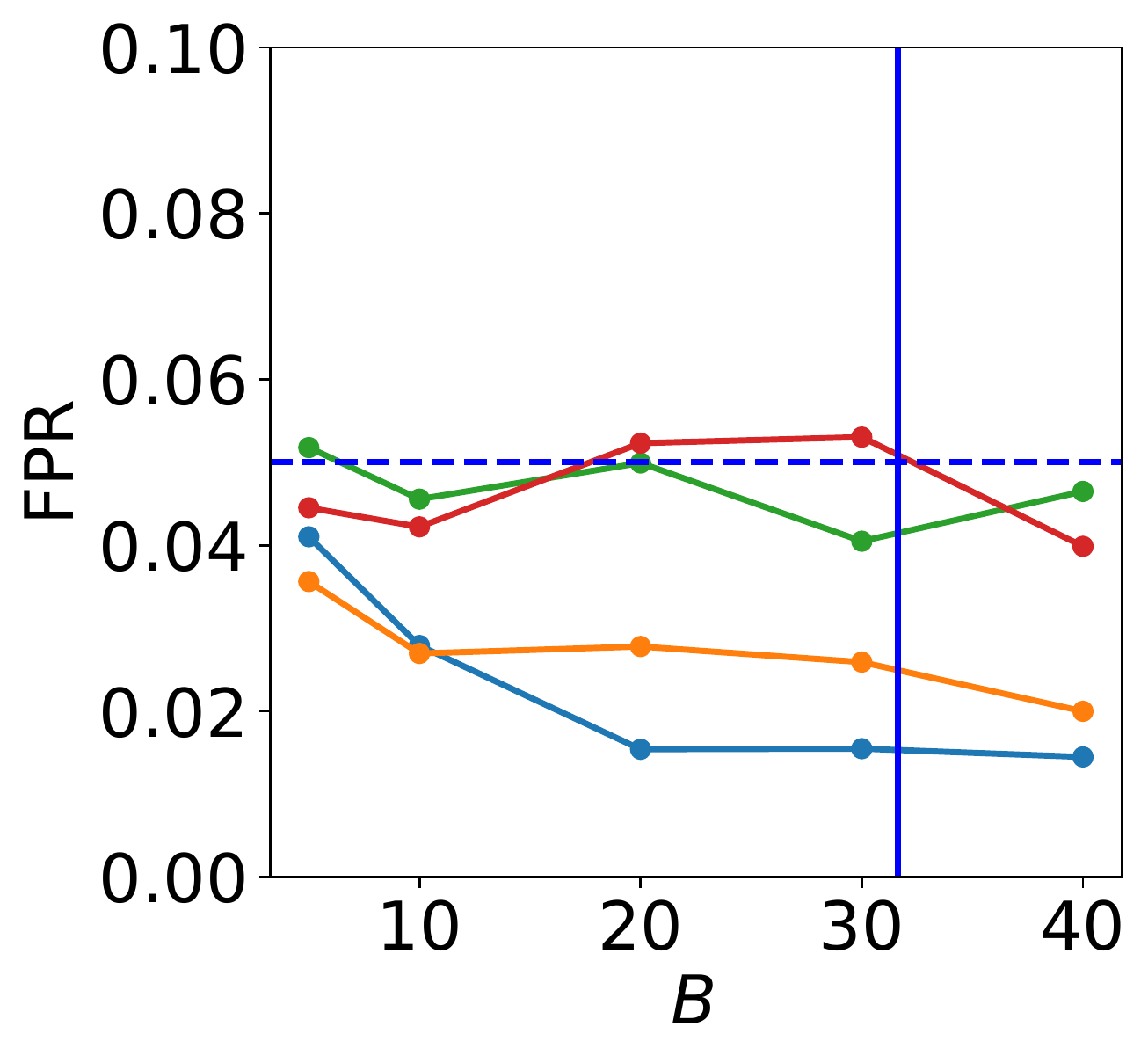}}
\caption{Logistic experiment. $B$ increases for $\mathrm{\widehat{HSIC}}_{B}$. We use a Gaussian kernel with its bandwidth either set to be $1$ or chosen with the median heuristic. We use $n=1000$.}
\label{fig:ex7}
\end{wrapfigure}
\textbf{Empirical Results:}
In this experiment, we use the same setup as Figure \ref{fig:ex3} for the Logit problem and the results are shown in Figure \ref{fig:ex7}. Our aim is to investigate the behaviour of our test when $B$ the block size increases. In \citet[Section 5]{zaremba2013b}, they investigated the behaviour of the block estimator under finite samples and found that there can have severe bias under the null hypothesis.

In our results, we observed that there was a large deviation for the nominal size $\alpha$ and an increase in the TPR. We speculate that this is due to the positive bias in finite samples of the skewness of the block estimator. These experiments show that the effect is more pronounced for MultiHSIC (than PolyHSIC) which may be because of our choice in parameterising the bootstrap samples as a normal distribution. We note that the effect of FPR going below the nominal $\alpha$ is not just for very large values of $B$ but even for the recommended heuristic $B=\sqrt{n}$. It would be interesting to investigate this problem and correct for it in future works.

\section{PROOFS}
\label{sec:appen_hsic}
In this section, we provide proofs for our statements in Section \ref{sec:si_hsic}. Before we begin, recall that

$$
h(i,j,q,r) = \frac{1}{4!}\sum_{(s,t,u,v)}^{(i,j,q,r)}\bm{K}_{st}[\bm{L}_{st}+\bm{L}_{uv}-2\bm{L}_{su}]
$$
is the order-$4$ U-statistic kernel for HSIC. We define the conditional expectation of
the U-statistic kernel
\begin{align*}
    h_4 &= h(i,j,q,r),\\
    h_3 &= \mathbb{E}[h(i,j,q,r)\,|\,i,\,j,\,q\,],\\
    h_2 &= \mathbb{E}[h(i,j,q,r)\,|\,i,\,j\,] \\
    h_1 &= \mathbb{E}[h(i,j,q,r)\,|\,i\,].
\end{align*}

Let $c$ be the smallest integer such that $h_c \neq \mathrm{HSIC}$. When $P \independent Q$, we have $h_1 = 0$ and $\mathrm{HSIC}=h_1$ so $c > 1$. However when $P \notindependent Q$, $h_1 \neq \mathrm{HSIC}$ so $c=1$.
Similarly, we  show that $\widehat{\mathrm{HSIC}}_{Inc}$ is asymptotically normal under mild assumptions.

\begin{restatable}[Asymptotic Distribution of $\widehat{\mathrm{HSIC}}_{Inc}$]{theorem}{asymptinchsic}
\label{thm:asympt_inchsic1}
Let $c$ be the smallest integer such that $h_c \neq \mathrm{HSIC}$ ($h_c$ defined in Appendix \ref{sec:appen_hsic}) and let $\lim_{n,l \rightarrow \infty} n^{-c}l=\lambda$
$(0 \le \lambda \le \infty)$ and let 
$\mathcal{D}$ be constructed by selecting $l$ subsets with replacement from $\bm{i}^{n}_4$ then,
\begin{enumerate}
    \item If $\lambda = 0$ then, $l^{\frac{1}{2}}(\widehat{\mathrm{HSIC}}_{\text{Inc}}(\bm{z}) -\mathrm{HSIC}(P_{xy}) \overset{d}{\rightarrow} \mathcal{N}(0,\sigma^2)$,
    \item If $0 < \lambda < \infty$ then, $l^{\frac{1}{2}}(\widehat{\mathrm{HSIC}}_{\text{Inc}}(\bm{z}) -\mathrm{HSIC}(P_{xy})) \overset{d}{\rightarrow} \lambda^{\frac{1}{2}} V + T$,
    \item If $\lambda = \infty$ then, $n^{\frac{c}{2}}(\widehat{\mathrm{HSIC}}_{\text{Inc}}(\bm{z}) -\mathrm{HSIC}(P_{xy})) \overset{d}{\rightarrow} V$,
\end{enumerate}
where $V$ is a random variable with the limit distribution of $n^{c/2}(\widehat{\mathrm{HSIC}}_{\text{u}}(\bm{z})-\mathrm{HSIC})$ and $T \sim \mathcal{N}(0,\sigma^2)$ where $\sigma^2=\mathrm{Var}[h(i,j,q,r)]$.
\end{restatable}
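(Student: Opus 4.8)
The plan is to isolate the two independent sources of randomness, the sample $\bm z$ and the random design $\mathcal D$, through the elementary decomposition $\widehat{\mathrm{HSIC}}_{Inc}(\bm z) - \mathrm{HSIC}(P_{xy}) = A_n + B_n$, where $A_n := \widehat{\mathrm{HSIC}}_{Inc}(\bm z) - \widehat{\mathrm{HSIC}}_u(\bm z)$ and $B_n := \widehat{\mathrm{HSIC}}_u(\bm z) - \mathrm{HSIC}(P_{xy})$. The point is that $B_n$ is a complete U-statistic deviation and is $\sigma(\bm z)$-measurable, whereas $A_n$ is driven by the draw of $\mathcal D$ conditional on $\bm z$; all three cases then follow from a joint limit for $(\sqrt{l}\,A_n,\,n^{c/2}B_n)$ together with a comparison of the rates $l$ and $n^{c}$ via $\lambda=\lim n^{-c}l$.

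First I would treat $B_n$ using the classical limit theory for complete U-statistics, i.e.\ Hoeffding's theorem with the degeneracy bookkeeping recalled just before the statement (\citet[Theorem 5]{song2012feature}, \citet{serfling2009approximation}): $n^{c/2}B_n \overset{d}{\to} V$, where $V$ is Gaussian when $c=1$ (the non-degenerate case $X\notindependent Y$) and is the usual weighted-$\chi^2$-type limit when $c\ge 2$ (the degenerate case $X\independent Y$, where $h_1=0$); in particular $n^{c/2}B_n=O_P(1)$. For $A_n$ I would work conditionally on $\bm z$. By construction $\mathcal D=\{(i_k,j_k,q_k,r_k)\}_{k=1}^l$ is an i.i.d.\ uniform sample \emph{with replacement} from $\bm i^n_4$, and since $\widehat{\mathrm{HSIC}}_u(\bm z)$ is exactly the average of $h$ over all of $\bm i^n_4$, we have $\mathbb E[h(i_k,j_k,q_k,r_k)\mid\bm z]=\widehat{\mathrm{HSIC}}_u(\bm z)$. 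Hence $A_n=\tfrac1l\sum_{k=1}^l\big(h(i_k,j_k,q_k,r_k)-\widehat{\mathrm{HSIC}}_u(\bm z)\big)$ is a normalised sum of conditionally i.i.d., conditionally centred, bounded summands (our kernels are bounded; finite second moment of $h$ suffices in general). A conditional Lindeberg--Feller CLT then applies: the Lindeberg condition is immediate, and the conditional variance $\tfrac{1}{|\bm i^n_4|}\sum_{(i,j,q,r)\in\bm i^n_4}\big(h(i,j,q,r)-\widehat{\mathrm{HSIC}}_u(\bm z)\big)^2$ is a U-statistic with kernel $h^2$ minus $\widehat{\mathrm{HSIC}}_u(\bm z)^2$, hence converges in probability to $\mathbb E[h^2]-\mathrm{HSIC}^2=\sigma^2=\mathrm{Var}[h(i,j,q,r)]$ by the law of large numbers for U-statistics. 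Thus $\sqrt l\,A_n\mid\bm z\overset{d}{\to}\mathcal N(0,\sigma^2)$ in probability.

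Because this conditional limit law is the \emph{same} deterministic Gaussian for (almost) every $\bm z$, the conditional characteristic function $\mathbb E[e^{\mathrm it\sqrt l A_n}\mid\bm z]\to e^{-t^2\sigma^2/2}$ in probability; combining this with $n^{c/2}B_n\overset{d}{\to}V$ (and $B_n$ being $\sigma(\bm z)$-measurable) through dominated convergence applied to $\mathbb E\big[e^{\mathrm is\,n^{c/2}B_n}\,\mathbb E[e^{\mathrm it\sqrt l A_n}\mid\bm z]\big]$ yields $(\sqrt l\,A_n,\,n^{c/2}B_n)\overset{d}{\to}(T,V)$ with $T\sim\mathcal N(0,\sigma^2)$ independent of $V$. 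Finally, writing $l^{1/2}(\widehat{\mathrm{HSIC}}_{Inc}-\mathrm{HSIC})=\sqrt l\,A_n+(l/n^c)^{1/2}\,n^{c/2}B_n$ and invoking Slutsky: if $\lambda=0$ the second term is $o_P(1)$, giving limit $T$ (case 1); if $0<\lambda<\infty$ the limit is $T+\lambda^{1/2}V$ (case 2); and if $\lambda=\infty$ one instead rescales by $n^{c/2}$, so that $n^{c/2}(\widehat{\mathrm{HSIC}}_{Inc}-\mathrm{HSIC})=(n^c/l)^{1/2}\sqrt l\,A_n+n^{c/2}B_n$ with the first term $o_P(1)$, giving limit $V$ (case 3). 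Corollary \ref{thm:asympt_inchsic2} then follows by specialising $c$: $c=2$ when $X\independent Y$ (so $\lambda=0$ under $n^{-2}l\to0$, and the limit is $\mathcal N(0,\sigma^2)$), and $c=1$ when $X\notindependent Y$ (so $V\sim\mathcal N(0,\sigma_u^2)$ and the limit variance is $\lambda\sigma_u^2+\sigma^2$).

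I expect the main obstacle to be the conditional CLT: making rigorous that the conditional variance converges in probability to $\sigma^2$ (a LLN for the data-driven triangular array of kernel values) and that the convergence of the conditional characteristic functions holds in probability, which is what makes the characteristic-function product argument close and delivers the asymptotic independence of $T$ and $V$; the rate comparison and Slutsky bookkeeping afterwards are routine. A shorter alternative is to verify only that our bounded kernel has finite second moment and then invoke the general limit theorem for incomplete U-statistics of \citet{janson1984asymptotic} directly, reading off cases 1--3 from its three regimes of $\lambda$; I would present the decomposition argument as the main proof and cite Janson's theorem as its conceptual source.
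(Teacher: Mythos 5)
Your argument is correct, but it is not the route the paper takes: the paper's ``proof'' of Theorem \ref{thm:asympt_inchsic1} is a one-line citation of the general limit theorem for incomplete U-statistics with randomly sampled designs (\citet[Corollary 1]{janson1984asymptotic}; see also \citet{lee2019u}), whereas you reconstruct that theorem from scratch. Your decomposition $\widehat{\mathrm{HSIC}}_{Inc}-\mathrm{HSIC}=A_n+B_n$, with $B_n$ the complete U-statistic deviation handled by Hoeffding-type theory at rate $n^{c/2}$, and $A_n$ handled by a conditional Lindeberg CLT given $\bm z$ (using that the $l$ design points are i.i.d.\ uniform on $\bm i^n_4$, so each summand has conditional mean $\widehat{\mathrm{HSIC}}_u(\bm z)$ and conditional variance converging in probability to $\sigma^2$ by an LLN for the U-statistic with kernel $h^2$), followed by the characteristic-function product argument giving asymptotic independence of $T$ and $V$ and Slutsky bookkeeping in the three regimes of $\lambda=\lim n^{-c}l$, is exactly the standard proof underlying Janson's result, and the cases you obtain match the statement. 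The only points needing care, which you correctly flag, are the triangular-array nature of the conditional CLT when $h$ is unbounded (bounded kernels such as the Gaussian make this immediate; otherwise one needs a uniform-integrability refinement of the conditional Lindeberg condition) and the in-probability convergence of conditional characteristic functions that closes the independence argument. What your approach buys is a self-contained proof that makes the two sources of randomness (data versus design) and the origin of the extra $\sigma^2$ term explicit; what the paper's approach buys is brevity, delegating precisely these technical steps to \citet{janson1984asymptotic} --- which is also the ``shorter alternative'' you mention at the end, so the two presentations agree in substance.
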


\begin{proof}
See \citet[Corollary 1]{janson1984asymptotic} and \citet[Theorem 1, Section 4.3.3]{lee2019u}
\end{proof}

\asymptinchsico*

\begin{proof}
When $P \independent Q$, then $c \ge 2$ then the result immediately
follows from Theorem \ref{thm:asympt_inchsic1} for the case $\lambda=0$.

For $P \notindependent Q$, then $c=1$ thus, under our assumptions, we obtain our result from Theorem \ref{thm:asympt_inchsic1}.
\end{proof}

\asymptinchsictest*

\begin{proof}
\label{proof:hsicstat}
This proof is identical to the proof of \citet[Theorem 5]{yamada2018post}. From Cram\'er-Wold theorem, it is sufficient to prove that for every $\bm\eta \in \mathbb{R}^d$, 
$$
\bm\eta^\top\begin{bmatrix} \widehat{\mathrm{HSIC}}_{Inc}(\bm{Z}^{(1)}) \\ 
                                            \vdots \\ 
                                            \widehat{\mathrm{HSIC}}_{Inc}(\bm{Z}^{(d)})
    \end{bmatrix}
    \overset{d}{\rightarrow}
    \bm\eta^\top\bm{V}
$$
where $\bm{V}$ is some normal distribution. Under our assumptions, for all $i$ $\widehat{\mathrm{HSIC}}_{Inc}(\bm{Z}^{(i)})$ follows a normal distribution. Following from the continuous mapping theorem, for all $\bm\eta \in \mathbb{R}^d$ we have as desired.
\end{proof}

\section{Additional Experiments}
In this section, we provide additional experiments with HSIC. The first is a benchmarking experiment similar to the one performed in Section \ref{sec:mmd_benchmark}. The second uses the Divorce dataset \citep{yontem2019divorce} where people were given a questionnaire about their marriage and asked to rate each statement about their marriage from 0 to 4 depending on the truthfulness. 
\label{sec:appen_exp}
\subsection{Benchmark}
The goal is to rediscover the original features with statistical significance. As seen in the Table \ref{ex:benchmark_hsic}, the results indicate that MultiSel achieves higher power (as with the MMD).
\begin{table}[ht]
\centering
\begin{tabular}{@{}lclcl@{}}
\toprule
               & \multicolumn{2}{c}{MultiSel-HSIC} & \multicolumn{2}{c}{PolySel-HSIC} \\ \midrule
Dataset        & TPR             & FPR            & TPR             & FPR             \\ \midrule
Pulsar ($n=100$) & $0.705$           & $0.023$          & $0.625$           & $0.025$           \\
Heart ($n=138$)  & $0.469$           & $0.029$         & $0.410$         & $0.030$           \\
Wine ($n=200$)   & $0.800$            & $0.042$          & $0.730$           & $0.058$           \\ \bottomrule
\end{tabular}
\caption{The TPR and FPR for the benchmarking experiment using $\widehat{\mathrm{HSIC}}_{Inc}$. The results are averaged over $100$ trials, with $\alpha = 0.05$.}
\label{ex:benchmark_hsic}
\end{table}

\subsection{Divorce Dataset}
We report the calculated $p$-values of each statistical test of dependency between a selected statement and the outcome of divorce. In the experiment, we chose $r=15$, $k=15$ (out of $54$) and $n=150$ with the results summaries in Table \ref{ex:divorce}. We found that MultiSel declared $6$ more statements as significantly (than PolySel) with a significance level at $\alpha=0.05$, including statements such as ``I feel aggressive when I argue with my wife.'' and ``My wife and most of our goals are common.''. We do not know the ground truth but the $6$ statements seem plausible. The results suggest that MultiSel has higher detection rate.

\begin{table}[ht]
\centering
\begin{tabular}{@{}lll@{}}
\toprule
                                                                                                                                        & \multicolumn{2}{c}{$p$-values}    \\ \midrule
                                                                                                                                        & MultiSel-HSIC   & PolySel-HSIC    \\ \midrule
My argument with my wife is not calm.                                                                                                   & \textless{}0.01 & \textless{}0.01 \\
Fights often occur suddenly.                                                                                                            & \textless{}0.01 & 0.41            \\
I can insult my spouse during our discussions.                                                                                          & \textless{}0.01 & 0.09            \\
\begin{tabular}[c]{@{}l@{}}When fighting with my spouse, I usually use expressions\\  such as ‘you always’ or ‘you never’.\end{tabular} & \textless{}0.01 & 0.17            \\
We're compatible with my wife about what love should be.                                                                                & \textless{}0.01 & 0.43            \\
My wife and most of our goals are common.                                                                                               & \textless{}0.01 & 0.25            \\
I feel aggressive when I argue with my wife.                                                                                            & \textless{}0.01 & 0.22            \\
We're starting a fight before I know what's going on.                                                                                   & \textless{}0.01 & \textless{}0.01 \\
\begin{tabular}[c]{@{}l@{}}I can use negative statements about my wife's personality\\  during our discussions.\end{tabular}            & \textless{}0.01 & 0.05            \\
I hate my wife's way of bringing it up.                                                                                                 & \textless{}0.01 & \textless{}0.01 \\
I enjoy our holidays with my wife.                                                                                                      & 0.12            & 0.27            \\
When we fight, I remind her of my wife's inadequate issues.                                                                             & 0.13            & 0.04            \\
\begin{tabular}[c]{@{}l@{}}When I argue with my wife, it will eventually work for me \\ to contact him.\end{tabular}                    & 0.16            & 0.14            \\
I know my wife's hopes and wishes.                                                                                                      & 0.77            & 0.56            \\
I can use offensive expressions during our discussions.                                                                                 & 0.94            & 0.89            \\ \bottomrule
\end{tabular}
\caption{The resultant $p$-values from one trial of the divorce dataset using $\mathrm{HSIC}_{Inc}$.}
\label{ex:divorce}
\end{table}

\end{document}